\newtheorem{definition}{Definition}[section]
\newtheorem{lemma}{Lemma}[section]
\tikzset{
  treenode/.style = {align=center, inner sep=0pt, text centered,
    font=\sffamily},
  arn_n/.style = {treenode, circle, black, font=\sffamily\bfseries, draw=black,
    fill=white, text width=1.5em}
}
\title{The Best Path Algorithm automatic variables selection via High Dimensional Graphical Models }
\author{ Luigi Riso$^1*$, Maria G. Zoia$^2$, Consuelo R. Nava$^3$}
\date{%
    $^1$ Università  Cattolica del Sacro Cuore, luigi.riso@unicatt.it\\    
    $^2$ Università  Cattolica del Sacro Cuore, maria.zoia@unicatt.it\\
    $^3$ Università  degli Studi di Torino, consuelorubina.nava@unito.it\\
    *corresponding author\\\bigskip
    \today
}
\begin{document}
\maketitle
 \begin{abstract}
This paper proposes a new algorithm for an automatic variable selection procedure in High Dimensional Graphical Models. The algorithm selects the relevant variables for the node of interest on the basis of   mutual information.
Several contributions in literature have investigated the use of mutual information in selecting the appropriate number of relevant features in a large data-set, but most of them have focused on binary outcomes or required high computational effort. The algorithm here proposed overcomes these drawbacks as it is an extension of Chow and Liu's algorithm. 
Once, the probabilistic structure of a High Dimensional Graphical Model is determined via the said algorithm, the best path-step, including  variables with the most explanatory/predictive power for a variable of interest, is determined via the computation of the entropy coefficient of determination. The latter,  being based on the notion of (symmetric) Kullback-Leibler divergence, turns out to be closely connected to the mutual information of the involved variables. 
The application of the algorithm to a wide range of real-word and publicly data-sets has highlighted its potential and greater effectiveness compared to alternative extant methods.
\end{abstract}

\bigskip
\noindent \textbf{Keywords}: \textit{Graphical Models, Chow-Liu Algorithm, Automatic Feature Selection, Mutual Information, Econometric linear models}

%%%%%%%%%%%%%%%%%%%%%%%%
\section{Introduction}
%%%%%%%%%%%%%%%%%%%%%%%%
The vast availability of large data-sets arises the problem of how  selecting the  explanatory variables of an econometric model. In this case, the use of dimensional reduction techniques \citep{altman2018curse, eklund2007embarrassment} becomes imperative to overcome drawbacks such as multicollinearity and endogeneity of the covariates, or degrees of freedom.
%an embarrassment of riches \citep{altman2018curse, eklund2007embarrassment}, in the choice of the variables since the requirements of clarity exogeneity of the covariates, degrees of freedom, and collinearity issues force the researcher to make an educated selection.
%A functional process to alleviate this embarrassment could be the dimensional reduction. 
In general, a dimensionality reduction technique can be expressed as an optimization problem \citep{saxena2008dimensionality}, over a $n\times p$ data-set $\textbf{X}$, collecting $n$ observations on  \begin{math} p\end{math} variables \begin{math} \textbf{x}_i, i=1,\dots, p\end{math}. 
%In details, let $\textbf{X}$ be a $n \times p$ matrix collecting $n$ observations on  \begin{math} p\end{math} variables \begin{math} \textbf{x}_i\end{math}. 
The purpose of the dimensional reduction is to obtain a new data-set %\begin{math} \textbf{Y} \end{math},
 with \begin{math} {k} \end{math} variables instead of  the original \begin{math} {p} \end{math} ones, that is a data-set 
 %\begin{math} \textbf{X} \end{math}
 with dimensionality \begin{math} {k} \end{math}, where \begin{math} k < p \end{math} and, often, \begin{math} k\ll p \end{math}.
 
 Two basic strategies can be employed to reduce the dimensions from $p$ to $k$: feature selection and feature extraction \citep{khalid2014survey}.
Feature selection (or variable selection) techniques focus on selecting some of the most important features from a data-set 
%in order to implement a model for a specific target variable, 
while feature extraction techniques generate new features which explain either some or the entire data-set information.  Feature selection  is more attractive than feature extraction, because it allows to carry out a more efficient and accurate analysis by eliciting irrelevant information included in a large set of variables \citep{uematsu2019high}, preserving their  interpretability.

The novelty of this paper is to propose an automatic variable selection algorithm, based on Probabilistic mixed Graphical Models \citep{jordan2004graphical}, to detect within a large data-set the relevant variables which have explanatory power toward a specific variable of interest. The Probabilistic High Graphical Model (GM) is obtained by implementing the algorithm devised  by \citet{edwards2010selecting} which is an extension of the Chow-Liu Algorithm. The latter allows to find the probabilistic structure and the maximum likelihood tree in a graph representing the relationships across either discrete or continuous variables of a data-set. The maximum likelihood tree is found by using  mutual information as edge weights on the complete graph with the vertex set representing the variables, and by applying a maximum spanning tree algorithm \citep{gavril1987generating}.  

%Trees are acyclic undirected graphs that represent the connected components of a forest (\citep{hojsgaard2012graphical}).
  %The nodes in a tree identify the random variables of the data-set, while connections between nodes represent their joint probability distributions \citep{chow1968approximating}, providing information about their mutual dependence or \textit{mutual information}. 
    Mutual information is a well-established measure of linear and non-linear dependence between variables in information theory \citep[see, e.g.]{steuer2002mutual}, and it can be used to increase understanding of the relationships among their features.

 In literature, some papers have investigated the use of mutual information to select an appropriate number of relevant features  from a data-set \citep{battiti1994using, estevez2009normalized, kwak2002input}. However, most of these algorithms focus only on binary variables, and their implementation requires high computational efforts \citep{kwak2002input}.

Differently, in this paper, the notion of mutual information is used at different steps. At a firs step, it is employed by the \citet{edwards2010selecting} 
algorithm to find the probabilistic relationship between the variables within a high-dimensional graph. At a second step, the reference to mutual information allows to detect the best path of the graph, namely the path including the variables that play a relevant explanatory/predictive role with respect to the variable of interest in the problem at hand. Here, the mutual information between the variable of interest, say $Y$, and the other variables, say $\textbf{X}=X_{1},\dots,X_{i}$, belonging to a given path is measured via the (symmetric) Kullback-Leibler distance between the density of $Y$ and the same  conditioned on $\textbf{X}$, using a procedure suggested by \citet{hall2004cross}. Interestingly, the reference to the Kullback-Leibler distance 
%to measure the amount of mutual information between $Y$ and the set of variables $\textbf{X}$ belonging to a given path, 
allows to establish a connection with the entropy coefficient of determination of generalized linear models (GLM) \citep{eshima2010entropy,eshima2007entropy}. The latter, which simplifies into the standard determination coefficient (EC) in the case of linear models, provides a statistical measure that can be effectively employed to describe the predictive power of $\textbf{X}$ as potential regressors of a statistical/econometric model explaining $Y$.

One of the advantages of the algorithm here proposed is its computationally efficiency, namely it works well also in a high dimensional contests (i.e. molecular biology, genomics, ect.) \citep[among others]{jones2005experiments, buhlmann2014high, carvalho2008high} where models with hundreds to tens of thousands of variables are employed 
 \citep{edwards2010selecting, meila1999accelerated, kirshner2012conditional}. Furthermore, as the algorithm here proposed selects variables according to given statistical criteria, either EC or $R^{2}$, related to mutual information, it selects variables that play the role of best predictors in a given statistical/econometric model. This, unlike the machine learning approach, it has the advantage of allowing on the one hand the interpretability of the causal links among variables and, on the other hand, the evaluation of the explanatory/predictive performance of the predictors via specific fitting criteria. 
 The remainder of this paper is organized as follows. In Section~\ref{2}, we recall the  properties of the extension of the Chow-Liu Algorithm. In Section~\ref{3}, we present the algorithm and the theory underlying it.  Section~\ref{4} shows  illustrative examples on real data while conclusions follow in Section~\ref{5}.

%%%%%%%%%%%%%%%%%%%%%%%%%%%%%%%%%
\section{High-Dimensional Graphical Models}\label{2}
%%%%%%%%%%%%%%%%%%%%%%%%%%%%%%%%%
In this section we introduce the essential of High-Dimensional Graphical Models (GM) which proves useful to devise an algorithm for selecting variables in a large mixed data-set.

GMs are used to specify the conditional independence relationships between random variables of a data-set. In a GM these  relationships are depicted as a network of variables. 
More precisely, a graph is a mathematical object \begin{math}
\textbf{G}=(V, E)
\end{math}, where $V$ is a finite set of nodes in a one-to-one correspondence with the random variables of the data-set, while \begin{math}
E \subset V \times V 
\end{math} is a subset of ordered couples of $V$  that defines edges or links representing the interactions between nodes \citep{jordan2004graphical}.

Two generic node, say $u$ and $v$, in a graph $\textbf{G}=(V,E)$, are connected if there is a sequence of distinct nodes, called \textit{path}  \citep{JSSv037i01}, such that $(u=v_1,\dots,v_{i-1}, v_k=v) \in E $, $\forall i=1,\dots,k$.
In the following, we will focus on a mixed data-set, $\textbf{X}$,  
composed of $n$ observations on $p$ variables, among which $r$ discrete, \begin{math} \textbf{D}=(\textit{D}_1,\dots,\textit{D}_r) \end{math} and \textit{q} continuous  \begin{math}
\textbf{C}=(\textit{C}_1,\dots,\textit{C}_q)
\end{math}.
Denote the \textit{i}-observation of \begin{math}
\textbf{X}=(\textbf{D},\textbf{C})\end{math} as \begin{math}
(d_i,c_i), \end{math} with $d_i$ and $c_i$ representing the $i$-th observation of the variables $\textit{D}_i \in \textbf{D}$ and $\textit{C}_i \in \textbf{C}$, respectively.
Given the one-to-one correspondence between variables and nodes, we can write the sets of nodes as $ V=\{ \Delta \cup \Gamma \} $, where $ \Delta$ and $\Gamma$  are the nodes corresponding to the variables in $ \textbf{D}$ and $ \textbf{C}$, respectively. 

One of the central problem in High-Dimensional GMs is the estimation of the underlying probability distributions of the variables from finite samples. 
 \citet{chow1968approximating} proposed an approach for discrete variables which approximates their probability functions via probability distributions of first order tree dependence, that is focusing on  acyclical graphs or trees in which any two vertices are connected by exactly one path. 
 %Trees are acyclic undirected graphs that represent the connected components of a forest \citep{hojsgaard2012graphical}.
Connections between nodes of a tree, or tree dependence, represent the (unknown) joint probability distributions of the nodes \citep{chow1968approximating}, providing information about their mutual dependence or mutual information. 

More in details, \citet{chow1968approximating} found out that a probability distribution of a  tree dependence is an optimum approximation to the true probability of the set of discrete variables composing the tree, if and only if the latter has maximum weight, namely maximum mutual information. 
To gain a better understanding of this result, let $d=(d_1,\dots,d_r)$ be the generic observation or cell of $\textbf{D}$, and let $\mathcal{D}$ be the set of the possible cells or levels of $\textbf{D}$, labelled as  $1,\dots,|D_v|$. 
Assume also that the cell probabilities, $p(d)$, factorize according to a tree $\tau$ -- written as $\textbf{G}_D=(\Delta, E_{\Delta})$, where $\Delta$ and $E_{\Delta}$ are the vertices and the edges set, respectively -- as follows
\begin{equation}
p(d)=\prod_{e \in E_{\Delta}} g_e(d) 
\end{equation}
for a given function $g_e(d)$ depending on the variables included in the edges set $e\in E_{\Delta}$.

If $e=(D_u,D_v)$, then $g_e(d)$ would be only a function of $d_u$ and $d_v$ \citep{edwards2010selecting} and, following \citep{chow1968approximating}, the cell probabilities,  would take the form 
\begin{equation}\label{eq:l1}
    p(\textbf{d}| \tau)= \frac{\prod_{u,v \in E_{\Delta}} p(d_u,d_u) }
    {\prod_{v \in V} p (d_v)^{d_v-1}}=
    \prod_{v \in V} p (d_v)  \prod_{u,v \in E_{\Delta}} \frac{p(d_u,d_u)}{p(d_u)p(d_v)}
\end{equation}
where $d_v$ is the number of edges incident to the node $v$, namely the degree of $v$.

 According to Eq.~\eqref{eq:l1}, the maximized log-likelihood, up to a constant, turns out to be
\begin{equation}
\sum _{(u,v) \in E_{\Delta}} I_{u,v}
\end{equation}
where $I_{u,v}$ is the mutual information between $D_u$ and $D_v$, defined as follows
\begin{equation*}
    I_{u,v}= \sum_{d_u,d_v} n(d_u,d_v) \ln \frac{n(d_u,d_v)}{n(d_u) n(d_v)}
\end{equation*}
 with $n(d_u,d_v)$ denoting the number of observations with $D_u=d_u$ and $D_v=d_v$.

 It is worth noting that the mutual information between two variables is defined as a measure of their closeness \citep{lewis1959approximating}. Therefore, mutual information is a dimensionless, non negative and symmetric quantity which measures the reduction of uncertainty about a random variable, given the knowledge of another.
 Accordingly, \citet{chow1968approximating} developed an algorithm which constructs the optimal dependence tree by maximizing the total branch weights, namely the mutual information, of the latter. Since the branch weights are additive, the maximum weight dependence tree can be constructed brunch by branch. It can be shown that the procedure also maximises the likelihood function and, accordingly, it engenders the maximum-likelihood estimator for the dependence tree. Thus, the maximum likelihood tree for the entire set \textbf{\textit{D}} can be obtained by computing $I_{u,v}$ as edge weights on the complete graph with vertex set $\Delta$, using a maximum spanning tree algorithm \citep{chow1968approximating}. 
 
It can be proved that $I_{u,v}$ is one half the usual likelihood ratio ($LR$) test statistic for marginal independence of $D_u$ from $D_v$, that is
 \begin{equation}\label{eq:test}
 2 I_{u,v}=LR 
 \end{equation}
 which is calculated by using the table of count $\{n(d_u,d_v)\}$ built by cross-tabulating $D_u$ and $D_v$. 
 Under the marginal independence of $D_u$ from $D_v$, $LR$ has an asymptotic $\chi^2_{(k)}$ distribution, with $k$ degrees of freedom, where $k$ %$k=(|X_u|-1)(|X_v|-1)$  
is the number of parameters considered under the null \citep{edwards2012introduction}.

 A similar approach can be employed to obtain the maximum likelihood tree, $\textbf{G}_C=(\Gamma, E_{\Gamma})$, for a set $\textbf{C}$ of continuous random variables that have a multivariate Gaussian distribution. Here the sample mutual information between two margins $C_u$ and $C_v$ is given by
 \begin{equation*}
     I_{u,v}=-N\ \frac{ln (1-\hat{\rho}^{2}_{u,v})}{2}
 \end{equation*}
where $\hat{\rho}_{u,v}$ is the sample correlation between $C_v$ and $C_u$.\\ As it happens for the case of discrete variables, also in this case the mutual information turns out to be related to the likelihood ratio test statistic $LR$ as in Eq.~\eqref{eq:test}.
Under marginal independence between $C_u$ and $C_v$, the statistic turns out to have a $\chi^2_{(1)}$ distribution \citep{edwards2012introduction}.

Before the appearance of the \citet{chow1968approximating} algorithm, which
%which are finalized to find the maximum weight spanning tree
%of a arbitrary undirected connected graph with positive
%edge weights, 
has been studied thoroughly, other algorithms were devised to determine the probabilistic structure and the corresponding maximum-likelihood estimator. In particular,  \citet{kruskal1956shortest} provides a simple and efficient solution to this problem. Starting with a null graph, it proceeds by adding at each step the edge with the largest
weight that does not form a cycle with the ones already chosen.
\citet{edwards2010selecting} proposed an extension of the Chow-Liu Algorithm that can be applied with mixed data-set $\textbf{X}$. This algorithm relies on the use of mutual information between a discrete variable, $D_u$, and a continuous variable, $C_v$. It is characterized by  the marginal model which results to be an
ANOVA model  \citep[section 4.1.7][]{edwards2012introduction}.

 It is worth noting that, when dealing with mixed variables, the evaluation of the mutual information $I(d_u,c_v)$ between each couple of nodes requires distinguishing between the case when the variance of $C_v$ is distributed homogeneously across the levels of the discrete variable $D_u$, from the case when it is heterogeneously distributed \citep{edwards2012introduction}. Let $
\{ n_i, \bar{c}_v, s_i^{(v)}\}_{i=1,\dots,|D_u|}$ be the sample cell count, mean and  variance of the couple of variables 
$(D_u, C_v)$.
% For a couple of variables 
%$(Z_u, Y_v)$, we can write the sample cell count, mean, and finally  the variance, respectively, $
%\{ n_i, \bar{y}_v, s_i^{(v)}\}_{i=1,...,|Z_u|}$. 
In the homogeneous case, an estimator of mutual information between $d_u$ and $c_v$ is given by:
\begin{equation*}
    I(d_u,c_v)=\frac{N}{2} \log \left(\frac{s_0}{s} \right),
\end{equation*}
where  
$s_{0}=\sum_{k=1}^{N}(c_{v}^{(k)}-\hat{c}_{v})/N$,  $s=\sum_{i=1}^{|D_u|}n_is_i/N$ and $k_{d_u,c_v}= |D_u|-1$ are the degrees of freedom of the test for marginal independence between the discrete variable $D_u$ and the continuous variable $C_v$.\\
In the heterogeneous case, an estimator of the mutual information is given by \begin{equation*}
    I(d_u,c_v)=\frac{N}{2} \log (s_0) -\frac{1}{2} \sum_{i=1,\dots,|D_s|} n_i \log (s_i)
\end{equation*}
where $
k_{d_u,c_v}= 2(|D_u|-1)$ are the degrees of freedom of the $LR$ test for the marginal independence between the discrete variable $D_u$ and continuous variable $C_v$, with statistic specified as in Eq.~\eqref{eq:test} and with
a $\chi^2_{(k_{u,v})}$ distribution.

 As pointed out by \citet{edwards2010selecting}, one of the disadvantage of selecting a tree on the basis of maximum likelihood is that it always include the maximum number of edges, even if the latter are not supported by data. Thus, he suggested the use of one of the following measures to avoid this drawback
 \begin{equation}
I^{AIC}=I(x_i,x_j)-2k_{x_i,x_j} \end{equation} or
\begin{equation}
I^{BIC}=I(x_i,x_j)-\log(n) k_{x_i,x_j}
\end{equation}
where \begin{math}
k_{x_i,x_j}
\end{math} are the degrees of freedom associated with the pair of variables, $x_i$ and $x_j$, that are defined according to the nature of the variables involved.
The above measures are employed in an algorithm to find the best-spanning tree \citep{edwards2010selecting}. The algorithm stops once the maximum number of edges has been included in the graph.

In the following, we apply the \citet{edwards2010selecting} algorithm which employs penalized likelihood criteria, $I^{AIC}$ and $I^{BIC}$, to detect the maximum likelihood tree in a High-Dimensional mixed GM. As proved by \citet[section 7.4]{hojsgaard2012graphical}, this procedure allows to restrict the analysis to sub-regions of the graph  corresponding to local strongly decomposable models with minimal AIC/BIC \citep{lauritzen1989graphical, edwards2010selecting}. 
This result is guaranteed  by: i) the property of  mixed GMs to be strongly decomposable, as they are acyclic and contain no forbidden paths \citep{lauritzen1989graphical}, that is paths between two non-adjacent discrete vertices passing through continuous vertices \citep[further details in][pp. 7-12]{lauritzen1996graphical}; ii) the property of the \citet{edwards2010selecting} algorithm that avoid the presence of forbidden paths.

%%%%%%%%%%%%%%%%%%%%%%%%%%%%%%%%%
\section{The Best Path Algorithm  }\label{3}
%%%%%%%%%%%%%%%%%%%%%%%%%%%%%%%%%
%Strongly decomposability is a crucial property of the algorithm as it allows to select the appropriate number of relevant variable for the target one. Thanks to this property, it is possible to restrict the analysis to sub-regions or sub-models of the graph which are local strongly decomposable models with minimal AIC/BIC \citep{lauritzen1989graphical, edwards2010selecting}.\\
%Before explaining the algorithm, it may be useful to resume the definition of \textit{path} in  graphical models.
%Two generic node $u$ and $v$ in the Graph $\textbf{G}=(V,E)$, are said to be connected if there is a sequence $u=v_1,..,v_k=v$ of distinct nodes such that $(v_{i-1}, v_i) \in E $, $\forall i=1,...,k $. The sequence $u=v_1,..,v_k=v$ is called \textit{path} \citep{JSSv037i01}. Now we have all element to introduce the algorithm.
To introduce the Best Path Algorithm (BPA), let us consider a data-set $\textbf{D}$ composed of $n$ observations on $p$ mixed variables, $\{X_{1},\dots,X_{p-1}, Y\}$, with $ p\ll n$. %Accordingly, the generic $i-th$ observation of $\textbf{D}$ is a vector including the $i$-th observation on the variables $\{ Y_{i},X_{i,1},\dots,X_{i,p-1}\}$. 
The BPA first builds  a High-Dimensional GM $\textbf{G}=(V,E)$ for the variables within the data-set. 
%for the data-set $\textbf{D}$ to select among all the variables $\{X_{1},...,X_{p-1}\}$ the set that is really effective to explain a given variable, say $Y$. 
Then, the detection of the best set of predictors for the variable $Y$ is carried out by referring to the notion of mutual information. In this regard, the connection between mutual information and entropy, as measured by the Kullback-Leibler (KL) divergence \citep{dembo1991information} is exploited for this scope. Indeed, it is well known that mutual information represents the information gain in one variable due the reduction in its entropy as a consequence of the knowledge of another variable. This gain is related to the KL divergence, that measures the ``distance'' between the distributions of the random variables under exam. The entropy coefficient of determination (ECD) is then employed to select the set of best predictors for the variable $Y$ of interest. ECD is based on the KL divergence between the density of $Y$ and the density of its linear projection on the space engendered by its potential admissible predictors.

In the following, some definitions, useful to develop a strategy finalized to the detection of the admissible predictors for the node of interest, are introduced.
\begin{definition}[Distance]
\label{distance}
Let us assume that a path  exists between the nodes $Y$ and $X_j$, that is
\begin{equation}
Y=X_{j+1},\dots,X_{j+k}=X_j.
\end{equation}
Then, we define distance of order $k$, $d_{Y,X_j}^{(k)}$, the number $k$ of the nodes which are present in the path from $Y$ to $X_j$.
It is simple to note that the following holds
\begin{equation}
    d_{Y,X_j}^{(k)}=n_{(X_{j+1},\dots,X_{j+k})}=d_{X_j,Y}^{(k)}
\end{equation}
where $n_{(X_{j+1},\dots,X_{j+k})}$ is the number of nodes included in the path.
\end{definition}
It is worth noting that  Definition~\ref{distance} does not allow to compute the distance  between nodes belonging to different trees.

\begin{definition}[path-steps]
\label{path-s}
A path-step $w_k$ for the node $Y$, with $k=1,\dots, \max\{d_{Y,X_j}\}$, is the subset of variables $X_i$ whose distance from $Y$ is equal or lower than $k$, that is 
\begin{equation}
  \textbf{X}_{w_k}= \{d_{Y,X_i} \leq k\}.
\end{equation}
The path-step $w_k$ corresponds to the sub-regions of the graph $\textbf{G}=(V,E)$ that satisfies the following property
\begin{equation}
   w_k=\textbf{G}_k=(V,E)_{d_{Y,X_i} \leq k}.
   %\textbf{G}_k=(V_{d_{Y,X_i} \leq k},E_{d_{Y,X_i} \leq k})
\end{equation}
\end{definition}
The above definitions prove useful to detect the set of variables which play the role of explanatory variables for $Y$ .
For instance, let us consider the graph in Figure~\ref{fig1}  which highlights a minimal BIC forest for a data-set including $p=15$ variables and $n$ observations. The graph is obtained by using the \citet{edwards2010selecting} algorithm.
 In order to detect which subset of the $X_j$  variables with $j=1,\dots,14$ are effective to explain $Y$, we operate an initial selection  by considering only the variables 
composing the tree to which $Y$ belongs. Indeed, insofar as the mutual information, between $Y$ and the variables non included in its tree can be presumed to be approximately null, the hypothesis of marginal independence between $Y$ and these variables can be assumed to hold.
Thus, given the structure of the graph in Figure~\ref{fig1}, the following holds:
\begin{equation}
I(Y, X_{13}) \approx I(Y, X_{12})\approx I(Y, X_{11}) \approx 0
\end{equation}
and, accordingly, the hypothesis of marginal independence between $Y$ and $X_j$ with $j=12,13,14$ turns out to hold \citep{edwards2012introduction}. Consequently, $X_{12}, X_{13}, X_{14}$ can be discarded from the process of feature selection.

\begin{figure}[H]\par\medskip
\centering
\includegraphics[scale=0.60]{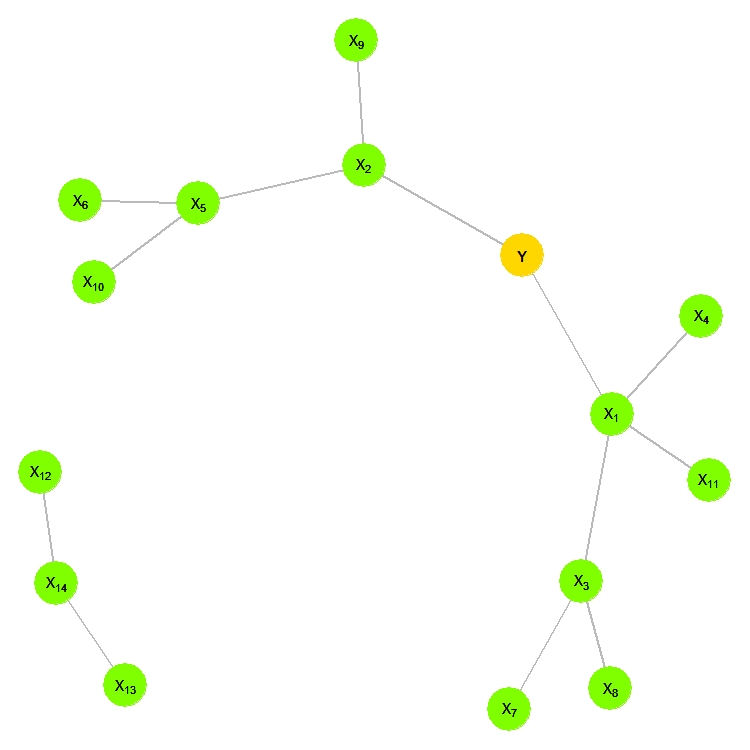}
\caption{Example of a minimal BIC forest from the \citet{edwards2010selecting} algorithm, from a data-set $\textbf{D}$ with $p=15$ variables and $n$ observations}
\label{fig1}
\end{figure}

%In this case, our node of interest is $Y$, for this reason, we pay attention to the big component of the forest that is, the tree to which $Y$ belongs. From this sentence, we can note that also during the process of selecting the variable of interest, we produce an initial selection of variables. In other words, we contemplate only the variables that compose the tree where the variable of interest belongs. 
%Indeed, in the figure \ref{fig1} the structure of the graph suggests to us that: $I(Y, X_{13}) \approx I(Y, X_{12})\approx I(Y, X_{11}) \approx 0$, this means that we accept the null hypothesis of marginal independence between $Y$ and $X_j$, whit $j=11,12,13$ \citep{edwards2012introduction},consequently, we discard these variables from the process of feature selection.
Let us now consider distances of increasing order between the node $Y$ and the variables $X_{j}$ with $j=1,\dots,11$ belonging to the same tree of $Y$, namely:
\begin{equation}
\label{dis}
\begin{split}
  &d_{Y,X_1}^{(1)}=d_{Y,X_2}^{(1)} ;\\
  &d_{Y,X_3}^{(2)}=d_{Y,X_4}^{(2)}=d_{Y,X_5}^{(2)}=d_{Y,X_9}^{(2)}=d_{Y,X_{11}}^{(2)};\\
  &d_{Y,X_6}^{(3)}=d_{Y,X_7}^{(3)}=d_{Y,X_8}^{(3)}=d_{Y,X_{10}}^{(3)}.
    \end{split}
    \end{equation}
These distances can be used to compute the \textit{path-steps} associated to the node $Y$:
\begin{equation}\label{eq:path}
    \begin{split}
        &w_1=\{ X_1, X_2 \};\\
        &w_2=\{ X_1, X_2, X_3, X_4, X_5, X_9, X_{11} \}; \\
        &w_3=\{ X_1, X_2, X_3, X_4, X_5, X_6, X_7, X_8, X_{10}, X_9, X_{11}\}
    \end{split}
\end{equation}
that include the potential subsets of variables to consider in the feature selection for $Y$.

\bigskip

The following theorem explains how to detect the best path-step among the ones in Eq.~\eqref{eq:path}, where the best path-step is the one  including the variables that are potentially relevant for $Y$.
\begin{lemma}
[The best path-step]
\label{T}
In High-Dimensional Graphical Models $\textbf{G}=(V,E)$ a path-step $w_i$  that satisfies the following property 
\begin{equation}\label{eq:tbp1}
   I(Y, \bm{X}_{ w_i})  \geq  I(Y, \bm{X}_{w_j} ) ,  i \ne j, i=1,\dots,k
\end{equation}
where $ \bm{X}_{ w_{i}}$, $ \bm{X}_{ w_{j}}$ denote the set of variables belonging to the path-steps $ w_{i}$ and $ w_{j}$, always exists for the node of interest $Y$. The path-step $w_i$ is the best one, insofar as it includes the variables that maximize the mutual information with $Y$.
\end{lemma}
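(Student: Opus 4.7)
The plan is to exploit two structural facts about the finite collection of path-steps $\{w_1,\dots,w_K\}$, where $K=\max_{j}d_{Y,X_{j}}$. First, by Definition~\ref{path-s} the sets are nested, $\bm{X}_{w_{1}}\subseteq \bm{X}_{w_{2}}\subseteq\cdots\subseteq \bm{X}_{w_{K}}$, because each $w_{i+1}$ is obtained from $w_{i}$ by adjoining precisely those variables whose distance from $Y$ equals $i+1$. Second, the collection is finite, so it is enough to show that the real-valued function $i\mapsto I(Y,\bm{X}_{w_{i}})$ attains its maximum on $\{1,\dots,K\}$; on a finite totally ordered set this reduces to pointing out where the maximum is achieved.

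Next I would invoke the chain rule for mutual information, which gives
\begin{equation*}
I(Y;\bm{X}_{w_{i+1}})=I(Y;\bm{X}_{w_{i}})+I\!\left(Y;\bm{X}_{w_{i+1}}\setminus \bm{X}_{w_{i}}\,\big|\,\bm{X}_{w_{i}}\right).
\end{equation*}
Since conditional mutual information is non-negative, the sequence $\{I(Y,\bm{X}_{w_{i}})\}_{i=1}^{K}$ is non-decreasing, and the supremum is therefore attained at $i=K$, the path-step that exhausts the tree containing $Y$. This is consistent with the preliminary pruning discussed just before the lemma: variables outside the tree of $Y$ contribute $I(Y,X)\approx 0$ by the marginal-independence argument and so can be discarded without affecting the maximum, hence the chain of inequalities in \eqref{eq:tbp1} indeed holds with $i=K$.

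I do not anticipate a serious obstacle, since the result is essentially a finite-set existence statement once monotonicity is in place. The only point requiring care is to reconcile the population-level chain rule used above with the sample estimators of mutual information introduced in Section~\ref{2} (the likelihood-ratio forms for the discrete--discrete, continuous--continuous, and mixed cases). Because each is the plug-in version of the corresponding population quantity, the monotonicity property is preserved and the identification of $w_{K}$ as the best path-step carries over. It is worth emphasizing that the monotonicity is specific to the unpenalized $I(\cdot,\cdot)$: for the penalized variants $I^{AIC}$ and $I^{BIC}$ the analogous inequality need not hold, which is precisely why those criteria serve to curb spurious inclusions in the subsequent selection step based on the entropy coefficient of determination.
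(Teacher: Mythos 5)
Your proof is correct, but it reaches the monotonicity step by a genuinely different route than the paper. The paper's proof treats $I(Y,\bm{X}_{w_i})$ as the \emph{sum of pairwise} mutual informations $\sum_{X\in w_i} I(Y,X)$ (see Eq.~\eqref{eq:bp}), groups the terms by distance class, and obtains the chain of inequalities in Eq.~\eqref{eq:tgt} simply because one is adding further non-negative pairwise terms as the nested path-steps grow; existence of a maximizer then follows from finiteness, and the paper closes with the heuristic that the ``best'' $w_i$ is the one at which the increments become approximately zero. You instead work with the \emph{joint} mutual information $I(Y;\bm{X}_{w_i})$ and derive monotonicity from the chain rule $I(Y;\bm{X}_{w_{i+1}})=I(Y;\bm{X}_{w_i})+I(Y;\bm{X}_{w_{i+1}}\setminus\bm{X}_{w_i}\mid\bm{X}_{w_i})$ together with non-negativity of conditional mutual information. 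Your version is the information-theoretically cleaner one, since in general the joint quantity does not equal the sum of pairwise terms (that would require additional conditional-independence structure), whereas the paper's additive surrogate is what its algorithm actually computes tree-edge by tree-edge; the two arguments coincide in what they deliver, namely that the sequence is non-decreasing over the nested path-steps so the supremum is attained at the largest one. Both proofs therefore share the same slightly awkward consequence, which you correctly flag: under pure monotonicity the maximizer is trivially the full tree, and the practically interesting ``best'' path-step only emerges once the increments are judged negligible (the paper's $\approx$ condition) or once penalized criteria and the entropy coefficient enter downstream.
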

\begin{proof}
To prove the existence of a path-step satisfying Eq.~\eqref{eq:tbp1}, let us consider the sum of the mutual information between the variable $Y$ and the set of the variables $\bm{X}=\{X_1,\dots,X_h\}$ belonging to its same tree 
\begin{equation} \label{eq:bp}
    I(Y, \bm{X})= I(Y,X_1)+I(Y,X_2)+ \dots+ I(Y,X_h). 
\end{equation}
Eq.~\eqref{eq:bp} can be rewritten as follows
\begin{equation}
\label{8}
    I(Y, \bm{X}) = \sum_{d_{Y,X}^{(1)}} I(Y,\bm{X})+\sum_{d_{Y,X}^{(2)}} I(Y,\bm{X})+\dots +\sum_{d(Y,X)^{max}}I(Y,\bm{X})
\end{equation}
where $\sum_{d_{Y,X}^{(k)}}$ is the sum of the mutual information between $Y$ and the variables $X_j$ which are $k=1,2,\dots$ nodes distant from $Y$ (see Definition~\ref{distance}).

%According to Definition \ref{path-s}, the above formula can be written in terms of mutual information pertaining the variables belonging to the different paths step $w_k$, $k=1,2...$ as follows
%\begin{equation}
%\label{8}
 %   I(Y, \textbf{X}) =\sum_{\textbf{X} \in {w_{1}}}I(Y,\textbf{X})+\sum_{\textbf{X} \in {w_{2}}}I(Y,\textbf{X})+...+\sum_{\textbf{X} \in {max(w)}}I(Y,\textbf{X})
%\end{equation}
Now, taking into account that mutual information is non-negative monotone, the following holds for the mutual information between the variables belonging to the different paths steps $w_k$, $k=1,2,\dots$
\begin{equation}\label{eq:tgt}
\sum_{\bm{X} \in w_1 }I(Y,\textbf{X}) \leq \sum_{\bm{X} \in w_2 }I(Y,\bm{X}) \leq \dots \leq \sum_{\bm{X} \in {\max(w_k)}}I(Y,\bm{X}).
\end{equation}
In light of Eq.~\eqref{eq:tgt}, the best path-step, say $w_i$, is the one including  the variables whose mutual information with $Y$ is significantly higher than the mutual information associated with variables included in lower path-steps ($w_j, j<i$), but approximately equal to that of variables included in higher path-steps ($w_j, j>i$), that is
\begin{equation}
\sum_{
\bm{X} \in w_{i-1}} I(Y,\bm{X}) < \sum_{\bm{X} \in w_{i}} I(Y,\bm{X})\approx \sum_{\bm{X} \in w_{i+1}} I(Y,\bm{X})\approx,\dots,\approx \sum_{\bm{X} \in {\max(w)}}I(Y,\bm{X}).
\end{equation}
\end{proof}

\bigskip

\noindent The notion of  path-step, as proposed in Definition~\ref{path-s}, can be alternately expressed in term of the KL Information.
In this regard, let us first recall that KL, which is closely related to relative entropy, or information divergence, measures  the difference between two probability distributions, say $f(Y)$ and $g(Z)$: 
\begin{equation}
KL\left[f(Y),g(Z)\right]=\mathbb{E}_{f(Y)}\left(\log \frac{f(Y)}{g(Z)} \right)
\end{equation}
where $\mathbb{E}_{f(Y)}$ denotes that the expectation is taken with respect to $f(Y)$.

KL is asymmetric in the two distributions, meaning that  $KL[f(Y),g(Z)] \neq KL[g(Z),f(Y)]$, and it does not satisfy the triangle inequality.
Within this research contest, KL can be employed to  measure the divergence between the density of $Y$, say $f(Y)$, and the density of the same variable when conditioned to the set of variables $\bm{X}_{w_{i}}=(X_1,\dots,X_{j},\dots)$ belonging to a given path-step $w_{i}$
\begin{equation}
KL [f(Y), f(Y|\bm{X}_{w_i})]= \sum_{ \bm X \in w_i}\mathbb{E}_{f(Y)}\left(\log\frac{f(Y)}{f(Y|{\bm X}_{w_i})}\right)
\end{equation}
 with
 \begin{equation}
    f(Y)=\sum_{\bm X\in w_i} \frac{f(Y|\bm {X}_{w_i})}{ \{\# \bm X\in w_i\}} 
    \label{fy}
 \end{equation}
 where $\{\# \bm X\in w_i\}$ is the number of  variables belonging to the path-step $w_i$.
 
Now, as well known, the Kullback-Leibler Information of the two above distributions is related to their mutual information, as the latter tallies with the expectation of the former, that is
\begin{equation}\label{eq:kll}
I(Y|\bm{X}_{{w}_{i}})=\mathbb{E}_{f(Y)}[ KL (f(Y), f(Y|\bm{X}_{w_i})].
\end{equation}
The more different the two distributions $f(Y)$ and $f(Y|\bm{X}_{w_i})$, the greater the reduction of the entropy of $Y$ when the set of variables ${X}_{ w_i}$ are employed to explain $Y$. 

Thus, for two sets of explanatory variables $\bm{X}_{1}=(X_1,\dots,X_{i})'$ and $\bm{X}_{2}=(X_1,\dots,X_{q})'$, if it occurs that
\begin{equation}\label{eq:lk}
KL[f(Y), f(Y|\bm{X}_{1})] \approx KL[f(Y), f(Y|\bm{X}_{2})]
\end{equation}
then also the following 
\begin{equation}\label{eq:lk1}
I(f(Y),  f(Y|\textbf{X}_{1}) \approx I(f(Y), f(Y|\textbf{X}_{2})
\end{equation}
must hold. 
Equality in Eq.~\eqref{eq:kll} allows to reformulate the notion of best path-step as follows.
\begin{lemma}\label{path-s1}
In High-Dimensional Graphical Model a path-step $w_i$ that satisfies the following property 
\begin{equation}\label{eq:tbp}
    \sum_{\bm X \in w_i} KL \left[f(Y), f(Y|\bm {X}_{w_i})\right]  \geq    \sum_{\bm X \in w_j} KL[f(Y), f(Y|\bm{X}_{w_j})],   \forall j=1,\dots,k; i \ne j
\end{equation}
always exists for the node of interest $Y$. The path-step $w_i$ is the best path-step, insofar as it includes the variables that maximize the information gain for $Y$. 
\end{lemma}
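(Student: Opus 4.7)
The plan is to reduce Lemma~\ref{path-s1} directly to Lemma~\ref{T} by exploiting the identity $I(Y, \bm{X}_{w_\ell}) = \mathbb{E}_{f(Y)}[KL(f(Y), f(Y|\bm{X}_{w_\ell}))]$ established in Eq.~\eqref{eq:kll}. Since Lemma~\ref{T} already guarantees existence of a path-step $w_i$ that is maximal in terms of mutual information with $Y$, it suffices to transport that maximality statement, through this identity, to a corresponding KL-based maximality. The same two ingredients exploited in Eqs.~\eqref{eq:lk}--\eqref{eq:lk1}, namely that large KL divergence is equivalent to large mutual information, give the translation in both directions.

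Concretely, I would proceed in the following order. First, invoke Lemma~\ref{T} to obtain a path-step $w_i$ with $I(Y,\bm{X}_{w_i}) \geq I(Y,\bm{X}_{w_j})$ for all $j\neq i$. Second, apply Eq.~\eqref{eq:kll} to rewrite each side as an expectation (with respect to $f(Y)$) of the corresponding KL divergence, using also the representation of $f(Y)$ from Eq.~\eqref{fy} as the uniform mixture of the conditionals $f(Y|\bm{X}_{w_\ell})$ over $\bm{X}\in w_\ell$. Third, because expectation is a positive, monotone linear functional and the KL integrand is a deterministic function of the pair of densities, the ordering of the expectations coincides with the ordering of the corresponding KL sums appearing on both sides of Eq.~\eqref{eq:tbp}. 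Fourth, unfold the KL definition to expose its inner sum $\sum_{\bm X\in w_\ell}$, matching exactly the form required by the lemma statement. Existence is then immediate because the supremum in Lemma~\ref{T} is attained over a finite family of path-steps $\{w_1,\dots,w_k\}$ associated with the finite graph $\textbf{G}=(V,E)$.

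The main obstacle I expect is a notational one rather than a mathematical one: the statement of Lemma~\ref{path-s1} places a sum $\sum_{\bm X\in w_i}$ in front of $KL[f(Y), f(Y|\bm{X}_{w_i})]$, whereas the definition introduced just before Eq.~\eqref{fy} already contains an internal sum over $\bm X\in w_i$. I would first fix the interpretation---most naturally, the outer sum should be read as absorbing into the KL definition (so that both sides measure total divergence across the variables in the path-step), which is consistent with the chain Eqs.~\eqref{eq:lk}--\eqref{eq:lk1}---and then verify that with this reading the identity Eq.~\eqref{eq:kll} provides a genuine two-way translation between orderings. Once this is done, the conclusion is a one-line consequence of Lemma~\ref{T} combined with monotonicity of expectation.
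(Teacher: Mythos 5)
Your proposal is correct and takes essentially the same route as the paper: the paper's own proof is a one-line reduction that ``hinges on'' Lemma~\ref{T} together with the identity between mutual information and the expected Kullback--Leibler divergence in Eq.~\eqref{eq:kll}, which is exactly the translation you carry out (your extra remarks on the notational ambiguity of the double sum over $\bm X \in w_i$ and on monotonicity of expectation only make explicit what the paper leaves implicit).
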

\begin{proof}
The proof hinges on Lemma~\ref{path-s} and the relationship between KL and mutual information given in Eq.~\eqref{eq:kll}.
\end{proof}
\bigskip

\noindent Once the best path-step in a Higher Dimensional GM has been identified, the best set of predictors for a variable of interest $Y$ can be detected by employing the entropy coefficient, a statistical measure which relies on the notion of Kullback-Leibler divergence. 
Taking advantage of the relationship between the Kullback-Leibler divergence and mutual information, \citet{eshima2007entropy} introduced the entropy coefficient which hinges on the symmetric Kullback-Leibler Information, $KL^{(s)}$ hereafter. 
The entropy coefficient, for measuring the divergence between  $f(Y)$ and $f(Y|\bm X_{w_i})$  is given by 
\begin{equation}\label{eq:EC}
EC=\frac{\sum_{\bm X \in w_i }KL^{(s)}(f(Y), f(Y|\bm X_{w_i}))}{\{\# \bm X\in w_i\}}
\end{equation}
where $KL^{(s)}$ is the symmetric Kullback-Leibler Information between $f(Y)$ and $f(Y|\bm {X}_{w_i})$
\begin{equation}\label{eq:ht}
KL^{(s)}[f(Y), f(Y| \bm X_{w_i})] =KL[f(Y), f(Y|\bm X_{w_i})]+KL[f(Y| \bm X_{w_i}),f(Y)]= 2iI(f(Y),f(Y)|\bm{X}_{w_{i}})
\end{equation}
It is trivial to show that if the path-step $w_1$ contains only one variable, the EC is equal exactly to zero.
%indeed \citet{eshima2007entropy} propose the computation of EC with $k$ regressors with $k \neq 1$. 
%with  $KL(f(Y| \bm X),f(Y)_{w_i})=E_{f(Y| \bm X)}\left(log\frac{f(Y| \bm X)}{f(Y)_{w_i}}\right)$. \\ 
Indeed, \citet{eshima2007entropy} proved that Eq.~\eqref{eq:EC} can be used to measure the predictive power of Generalized Linear Models, GLMs hereafter, when the density function of the response variable $Y$, is conditioned to a set of explanatory variables $\bm{X}_{w_{i}}$, belongs to the exponential family:
\begin{equation}\label{eq:mod}
f(y|\bm{X}_{w_{i}})=\exp\left( \frac{y \theta- b(\theta) }{a(\phi)} + c(y,\phi) \right)
\end{equation}
where $\theta$ and $\phi$ are parameters, and $a(\phi)$, $b(\theta)$ and $c(y,\phi)$ specific functions with $a(\phi)>0$. 
They proved that
\begin{equation}\label{eq:cc1}
KL^{(s)}[f(Y), f(Y|\bm{X}_{w_{i}})] = \frac{cov(\theta,Y)}{a(\phi)}=EC
\end{equation}
where $EC=\frac{cov(\theta, Y)}{a(\phi)}$ can be viewed as the average change of uncertainty of the response variable $Y$ as function of the explanatory variable $\bm{X}_{w_{i}}$. 
In the GLM \eqref{eq:mod}, $cov(\theta, Y)$ is nonnegative, and it is zero if and only if X
and Y are independent, i.e. $f(Y)= f(Y|\bm{X}_{w_{i}})$. 
 \citet{eshima2010entropy} demonstrated that $cov(\theta, Y)$ can be interpreted as the the entropy of $Y$ explained by the set of variables $\bm{X}_{w_{i}}$, while $a(\phi)$ is the residual entropy of $Y$, namely the entropy of the latter not explained by $\bm{X}_{w_{i}}$. 
Later on, the authors proposed a slight modification of Eq.~\eqref{eq:cc1}, called entropy coefficient of determination (ECD), given by
\begin{equation}\label{eq:kh}
ECD=\frac{cov(\theta, Y)}{cov(\theta,Y) + a(\phi))}=\frac{EC}{EC+1}
\end{equation}
which can be read as the proportion of the variation in entropy of $Y$ explained by $\bm{X}_{w_{i}}$. 

The reference to either EC or ECD proves useful to determine  the best set of predictors for $Y$. In this regard, we introduce the following lemma.
\begin{lemma}
\label{TT}
In High-Dimensional Graphical Models $\textbf{G}=(V,E)$ a path-step $w_i$  that satisfies the following property 
\begin{equation}\label{eq:tbp11}
  EC(Y, \bm{X}_{ w_i})  \geq  EC(Y, \bm{X}_{w_j} ) ,  i \ne j, i=1,\dots,k
\end{equation}
or equivalently
\begin{equation}\label{eq:tbp12}
  ECD(Y, \bm{X}_{ w_i})  \geq  ECD(Y, \bm{X}_{w_j} ) ,  i \ne j, i=1,\dots,k
\end{equation}
always exists for the node of interest $Y$. The path-step $w_i$ is the best path-step insofar as it includes the variables that maximises the the mutual information with $Y$. 
\end{lemma}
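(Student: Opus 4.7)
The plan is to reduce Lemma~\ref{TT} to Lemma~\ref{path-s1} (and therefore to Lemma~\ref{path-s}) by exploiting the explicit functional relationships that tie EC and ECD to the symmetric Kullback-Leibler information and, through Eq.~\eqref{eq:ht}, to the mutual information. First I would substitute the identity $KL^{(s)}[f(Y), f(Y|\bm X_{w_i})] = 2\,I(Y, \bm X_{w_i})$ supplied by Eq.~\eqref{eq:ht} into the definition of EC in Eq.~\eqref{eq:EC}, to obtain
\[
EC(Y,\bm X_{w_i}) \;=\; \frac{2}{\{\#\bm X\in w_i\}}\sum_{\bm X\in w_i} I(Y,\bm X_{w_i}).
\]
This displays EC as a positive multiple of the cumulated mutual information that is the subject of Lemma~\ref{path-s}, and via Eq.~\eqref{eq:kll} as a positive multiple of the cumulated KL divergence handled in Lemma~\ref{path-s1}. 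Consequently, ordering path-steps by EC reduces to ordering them by either of the quantities already controlled in those earlier lemmas.

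Next, to handle ECD, I would observe that Eq.~\eqref{eq:kh} expresses ECD as the map $EC\mapsto EC/(EC+1)$, which is strictly monotone increasing on $[0,\infty)$. Since EC is non-negative by the GLM argument of \citet{eshima2007entropy}, this transformation is order-preserving, so any $w_i$ that maximises EC also maximises ECD and conversely; the equivalence between Eq.~\eqref{eq:tbp11} and Eq.~\eqref{eq:tbp12} follows at once. Existence of the maximising $w_i$ is then inherited directly from the proof of Lemma~\ref{path-s}: the sequence $\sum_{\bm X\in w_k} I(Y,\bm X)$ along the nested path-steps $w_1\subseteq w_2\subseteq\cdots$ is non-negative, non-decreasing and bounded above by the mutual information of $Y$ with its whole tree, so it must stabilise, and the first index $i$ at which stabilisation occurs (in the ``$\approx$'' sense used in Lemma~\ref{path-s}) identifies the sought path-step.

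The main obstacle lies in the normalisation by $\{\#\bm X\in w_i\}$ appearing in Eq.~\eqref{eq:EC}: because the denominator itself grows with the cardinality of the path-step, adding a weakly informative variable to a large $w_i$ could in principle \emph{decrease} EC rather than leave it unchanged, so the non-negative monotonicity argument borrowed from Lemma~\ref{path-s} only survives in the ``approximately equal'' sense used there and not as strict monotonicity. Making the claim fully rigorous would require pinning down what ``maximising'' EC means across path-steps of different cardinality, possibly by restricting the comparison to path-steps taken after the stabilisation index of the unnormalised mutual information; this is the only place where the reduction to the previous lemmas is not entirely mechanical, and it is where I would expect the bulk of the careful work to sit.
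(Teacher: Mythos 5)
Your proof follows essentially the same route as the paper's: the paper's own justification of this lemma is a two-line appeal to Eq.~\eqref{eq:ht} (tying EC to the symmetric Kullback--Leibler information and hence to mutual information) and Eq.~\eqref{eq:kh} (ECD as the strictly increasing transform $EC/(EC+1)$ of EC), which is exactly the reduction to Lemmas~\ref{T} and~\ref{path-s1} that you spell out. The obstacle you flag concerning the normalisation by $\{\#\bm X\in w_i\}$ in Eq.~\eqref{eq:EC} --- namely that EC is an average rather than a sum, so the monotonicity argument of Lemma~\ref{T} does not transfer verbatim and the maximiser can sit at an interior path-step --- is a genuine subtlety the paper's proof does not address (its own Table~\ref{tab1}, where EC peaks at $w_5$ and then declines, confirms the non-monotonicity), so your version is if anything more careful than the published one.
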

\begin{proof}
The proof rests on the relationship between the symmetric Kullback-Leibler information, on which EC hinges, as expressed by Eq.~\eqref{eq:ht}. The same holds for ECD, in light of Eq.~\eqref{eq:kh} 
\end{proof}
Once, the set of best predictors for the variable $Y$ has been detected, the explanatory role of each of them can be investigated by applying the \citet{kraskov2004estimating}  Mutual Information test of independence between $Y$ and each variable of the optimal pat-step. Indeed, as some variables within $w_{i}$ may be not relevant for $Y$, the set of predictors can be further improved by ruling out those of them which turn out to be not statistically significant for $Y$. This would allow to obtain a more parsimonious set of predictors for $Y$.

In the case of  normal linear models, the set of best predictors for $Y$ can be simple detected by modeling $Y$ with a set of linear regression models, each of them having as explanatory variables the set of variables belonging to the different path-steps, and computing either the coefficient of determination or the adjusted coefficient of determination of each regression. The regression model with the highest coefficient of determination (or adjusted coefficient of determination) identifies the set of best predictors. In this regard, we have the following
%using the adjusted coefficient of determination as explained in the following
\begin{lemma}\label{cor:1}
In case of a normal linear regression model, the set of inequalities in Eq.~\eqref{eq:tbp11} can be replaced by 
\begin{equation}\label{eq:ll3}
\bar{R}^{2}_{Y|\bm{X}_{w_{i-1}}} \leq \bar{R}^{2}_{Y|\bm{X}_{w_{i-1}}} \leq \dots \leq \bar{R}^{2}_{Y|\bm{X}_{ w_{i}}}
\end{equation}
where $\bar{R}^{2}$ is the adjusted coefficient of determination
\end{lemma}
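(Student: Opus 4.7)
The plan is to specialize the GLM identity \eqref{eq:cc1} to the normal linear model, derive a closed-form relation between $EC$ and the ordinary coefficient of determination $R^{2}$, and then argue that the resulting strictly monotone dependence transports the ordering on $EC$ given by Eq.~\eqref{eq:tbp11} into the claimed ordering on $\bar{R}^{2}$ along the nested sequence of path-steps.

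First, I would recast the linear regression $Y\mid\bm{X}_{w_{i}}\sim\mathcal{N}(\bm{X}_{w_{i}}'\beta,\sigma^{2})$ in the exponential form of Eq.~\eqref{eq:mod}, identifying the canonical parameter as $\theta=\bm{X}_{w_{i}}'\beta$, $b(\theta)=\theta^{2}/2$, and dispersion $a(\phi)=\sigma^{2}$. Substituting into Eq.~\eqref{eq:cc1} and using the identity $\mathrm{cov}(\bm{X}_{w_{i}}'\beta,Y)=\mathrm{Var}(\bm{X}_{w_{i}}'\beta)$, valid when $\beta$ is the population OLS coefficient, one obtains
\[
EC(Y,\bm{X}_{w_{i}})=\frac{\mathrm{Var}(\bm{X}_{w_{i}}'\beta)}{\sigma^{2}}=\frac{R^{2}_{Y\mid\bm{X}_{w_{i}}}}{1-R^{2}_{Y\mid\bm{X}_{w_{i}}}}.
\]
Because the map $t\mapsto t/(1-t)$ is strictly increasing on $[0,1)$, the ordering of $EC$ values across the path-steps coincides with the ordering of the corresponding $R^{2}$ values; the same conclusion follows for $ECD$ via Eq.~\eqref{eq:kh}.

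Second, to pass from $R^{2}$ to $\bar{R}^{2}=1-(1-R^{2})(n-1)/(n-k-1)$, I would exploit the nested structure $w_{1}\subset w_{2}\subset\cdots$ built in Definition~\ref{path-s}: moving from $w_{i}$ to $w_{i+1}$ enlarges the regressor set, so $R^{2}$ is weakly increasing by construction, while the degrees-of-freedom penalty grows only with the cardinality $\{\#\bm{X}\in w_{i}\}$ already appearing in the normalization of Eq.~\eqref{eq:EC}. The argument then reduces to showing that the cardinality-normalised $EC$ increments and the $(n-1)/(n-k-1)$ increments have the same sign, which amounts to a direct monotonicity calculation on the residual sum of squares.

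The principal obstacle is exactly this last matching of two different penalty schedules: the $\{\#\bm{X}\in w_{i}\}$ normalisation that underlies $EC$ in Eq.~\eqref{eq:EC} is not a simple multiple of the Fisher-type factor $(n-k-1)^{-1}$ in $\bar{R}^{2}$. I expect this to be resolved by invoking the high-dimensional regime $p\ll n$ posited at the start of Section~\ref{3}, which guarantees that the penalty increments are uniformly dominated by the explained-variance gains whenever a truly informative path-step is added, so that the chain \eqref{eq:ll3} inherits the direction of the chain \eqref{eq:tbp11}.
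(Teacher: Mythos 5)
Your first two steps reproduce the paper's own argument, only more explicitly: the published proof likewise specializes \eqref{eq:cc1} to the linear model with $\theta=\bm{\beta}'\bm{X}$, states $ECD=R^{2}$ and $EC=R^{2}/(1-R^{2})$, and then appeals in one clause to ``the relationship'' between $R^{2}$ and $\bar{R}^{2}$ to conclude. Your derivation of $EC=\mathrm{Var}(\bm{X}_{w_i}'\beta)/\sigma^{2}$ and the observation that $t\mapsto t/(1-t)$ is strictly increasing on $[0,1)$ are a genuine improvement in precision over the text, which simply asserts these identities.

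The gap is in your third step, and it is a gap the paper shares but does not acknowledge. The ordering of $EC$ (equivalently of $R^{2}$) along the nested sequence $w_{1}\subset w_{2}\subset\cdots$ does not transfer to $\bar{R}^{2}$, and no appeal to the regime $p\ll n$ can make it do so. Writing $\bar{R}^{2}=1-(1-R^{2})(n-1)/(n-k-1)$, the passage from $w_{j}$ to $w_{j+1}$ raises $\bar{R}^{2}$ if and only if the incremental $F$-statistic of the added block exceeds one; appending a block of regressors whose partial contribution is negligible lowers $\bar{R}^{2}$ no matter how large $n$ is relative to $p$, because the comparison is between the \emph{increment} in $R^{2}$ and the \emph{increment} in the degrees-of-freedom penalty, which are small of the same order. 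Your proposed resolution --- that the penalty increments are ``uniformly dominated by the explained-variance gains whenever a truly informative path-step is added'' --- is circular: it holds only for informative steps, whereas the chain \eqref{eq:ll3} must also survive the uninformative ones below $w_i$. Indeed, if the monotonicity held all the way to $\max(w_k)$ the criterion would always select the largest path-step; Tables~\ref{tab7}, \ref{tab9} and~\ref{tab11} show $\bar{R}^{2}$ turning over precisely because uninformative blocks are eventually appended, and that turnover is the entire selection mechanism. The honest reading of the lemma is that \eqref{eq:ll3} holds up to $w_i$ \emph{by definition} of $w_i$ as the $\bar{R}^{2}$-maximizer, i.e., the lemma substitutes the criterion ``maximize $\bar{R}^{2}$'' for \eqref{eq:tbp11} rather than deriving one chain from the other. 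Your attempt to prove the transfer as a general implication is therefore aimed at a statement that is false as stated, and the domination argument would fail on any example in which some intermediate path-step contributes only noise variables.
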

%\begin{corollary}\label{cor:1}
%In case of a normal linear regression model, the set of inequalities in Eq.~\eqref{eq:llo} can be replaced by 
%\begin{equation}\label{eq:ll3}
%\bar{R}^{2}_{\bm{X}_{w_{i-1}} \cup x_{1} \in w_i } \leq \bar{R}^{2}_{\bm{X}_{w_{i-1}} \cup x_{1}, x_{2} \in w_i}\leq \dots \leq \bar{R}^{2}_{\bm{X}_{ w_{i}}}
%\end{equation}
%where $\bar{R}^{2}$ is the adjusted coefficient of determination
%\end{corollary}
\begin{proof}
In case of a linear regression, $\theta$ is a linear function of the predictors, that is $\theta=\bm{\beta}'\bm{X}_{i}$, and the ECD turns out to be equal to the coefficient of determination $R^{2}$, while $EC=\frac{R^{2}}{1-R^{2}}$.
Given the equivalence of the ECD with the standard coefficient of determination $R^{2}$, and the relationship of the latter with the adjusted coefficient of determination, the best set of predictors for $Y$ is the one that for which $\bar{R^{2}}$ has the highest value.      
\end{proof}
As in the general case, once the set of best predictors for $Y$ is detected, a parsimonious set of the the same can be obtained by 
testing the statistical significance of each of them within the regression model.

The above considerations are at the base of the BPA described here below. This algorithm belongs to Sequential Forward Search given that it starts with an empty set and keeps on adding variables \citep{patil2014dimension}. Then, two distinct algorithms are employed: one for generalized linear models which relies on the entropy coefficient EC to find the best set of predictors for a variable of interest, and the other which is devised for linear models and which makes use of  $\bar{R}^2$ for the choice of the best predictors. The former version is clearly more general as, on the one hand, it can be applied to detect the best set of predictors for either continuous or categorical variables and, on the other hand, being based on densities, it works also when the number of predictors is higher than the sample size. 

The pseudo-code of the algorithm based on the EC is the following:  
 \begin{itemize}
 \item \textbf{Step 0}: run the algorithm to find the optimal tree or forest using the \citet{edwards2010selecting} algorithm, and call this model $\mathcal{M}_0$.
     \item \textbf{Step 1}: select the variable of interest $Y$ and, starting form the node $Y$, identify all path-steps   $w_i$ with $i=1,\dots,k$.
    \item \textbf{Step 2}: $\forall\,i=1,\dots,k$ compute the EC for the  explanatory variables $X_{w_i}$ belonging to each path-step $w_i,  i=1,\dots,k$. \\EC is computed according to Eq.~\eqref{eq:EC}, via the computation of the symmetric Kullback-Leibler divergence between the density of $Y$ and the density of the same conditioned on the set of nodes belonging to the path-steps $w_i,  i=1,\dots,k$. To this aim, the \citet{hall2004cross} algorithm, based on cross-validation and kernel functions to estimate conditional densities, is used. Cross-validation automatically identifies the variables of the path which are relevant for a variable of interest and remove the others. Irrelevant variables are identified by smoothing parameters diverging to their upper extremities as well as by bandwidths diverging to infinity. Cross-validation shrinks these variables to the uniform distribution on the real line and chooses appropriate smoothing parameters and bandwidths for the relevant variables that remain. 
     \item \textbf{Step 3}: pick as best path-step, the one with the highest EC, and call it $ \mathcal{M}_w$. If there were path-steps with equivalent ECs, the most parsimonious one (the one with fewer variables) would be chosen as optimal.
     \item \textbf{Step 4}: implement the \citet{kraskov2004estimating}  Mutual Information test of independence between $Y$ and $X_j \in \mathcal{M}_w$  in order to select the subset $\mathcal{M}_{w_f}$ of variables which are statistically significant for $Y$.
 \end{itemize}
 The pseudo-code of the algorithm  based on the $\bar{R}^2$ is the following:  
 \begin{itemize}
 \item \textbf{Step 0}: run the algorithm to find the optimal tree or forest using the \citet{edwards2010selecting} algorithm, and call this model $\mathcal{M}_0$.
     \item \textbf{Step 1}: select the variable of interest $Y$ and, starting from the node $Y$, identify all path-steps   $w_i$ with $i=1,\dots,k$.
     \item \textbf{Step 2}:   $\forall\,i=1,\dots,k$:
         \subitem (a) fit an econometric  linear regression model explaining $Y$ through $\bm{X}_{w_i}$ and compute $\bar{R}^2$;     
         \subitem (b) implement cross-validation.
     \item \textbf{Step 3}: pick as best model the one with the highest $\bar{R}^2$, and call it $ \mathcal{M}_w$. If there were path-steps with equivalent $\bar{R}^2$, the most parsimonious one (the one with fewer variables) would be chosen as optimal.
     \item \textbf{Step 4}: select only the statistically significant variables based on a t-test in $\mathcal{M}_w$ and call the reduced model  $\mathcal{M}_{w_f}$.
 \end{itemize}
\noindent Both the algorithms operate a reduction of the optimal predictors in the various steps. Indeed, the following holds:
 \begin{equation}
 \mathcal{M}_{w_f} \subseteq \mathcal{M}_w \subset \mathcal{M}_0.
 \end{equation}
 These algorithms are particular cases of the mRMRe algorithm  \citep{de2013mrmre}, originally proposed by \citet{battiti1994using}, which assesses the role played by some variable in explaining another one on the basis of their mutual information with the latter   \citep{kratzer2018varrank}.

 In particular, with reference to the BPA, \textbf{Step 0} operates an unsupervised learning, finalized to highlight the relationships between variables inside the data-set \citep{edwards2010selecting}. At \textbf{Step 1}, the algorithm organizes the variables in specific subsets or path-steps according to a specific variable of interest. At  \textbf{Step 2}, the models with explanatory variables included in the path-steps devised in the previous step are estimated and a cross validation analysis is carried out to avoid problems such as overfitting and selection bias. This also guarantee the possibility to  extend  model results to other independent data-set  \citep{cawley2010over}.
 In this stage the ability of the model to predict data not used in the estimation phase, is checked by dividing randomly the set of observations into $h$ groups (or fold) of approximately equal size. Each fold, in turn, is treated as the validation set \citep{friedman2017elements}. In particular, if the first path-step $w_1$ includes only one variable, the search for the best path-step starts from the second path-step $w_2$.
 At \textbf{Step 3}, the algorithm 
identifies the best path-step by using the Entropy Coefficient of Determination (or $\bar{R}^2$ in case of linear regression models).
 Finally, if some variables of the best path are not statistically significant for $Y$, they are removed in \textbf{Step 4} by applying the \citet{kraskov2004estimating} Mutual Information test.
 
 It is worth noting that the best path-step $w_i$, including the  most relevant variables for explaining $Y$, is chosen not only within all other path-steps but also within all possible subsets of variables included in the data-set.

\section{The Best Path Algorithm at Work}\label{4}
%%%%%%%%%%%%%%%%%%%%%%%%%%%%%%%%%
As test-bed for the BPA, some applications employing different real-world and publicly available data-sets are considered.
This, with the aim to show the potentialities of the proposed approach and the breadth of possible multidisciplinary implementations of the algorithm. Either the version of the BPA based on the EC index or the coefficient of determination are considered. 
%This allow also to ensure reproducibility of the obtained results. 
The first data set, available in R, includes information from Major League Baseball Data from the 1986 and 1987 seasons and it is labelled Hitters \citep{james2013introduction}. The second data-set, still available in R, includes the information taken from a pedometer over a period of nearly 11 months.
The third data-set, created by Athanasios Tsanas of the University of Oxford, includes clinical information from speech signals which were provided by LSVT Global, a company specialising in voice rehabilitation \citep{tsanas2013objective} \footnote{Source: 
\href{https://archive.ics.uci.edu/ml/datasets/LSVT+Voice+Rehabilitation}{https://archive.ics.uci.edu/ml/datasets/LSVT+Voice+Rehabilitation}}
The fourth data-set refers to a study on prostate cancer  \citep{stamey1989prostate}.
The fifth   application employs Communities and Crime data-set\footnote{Source: \href{https://archive.ics.uci.edu/ml/datasets/communities+and+crime}{https://archive.ics.uci.edu/ml/datasets/communities+and+crime}}. It combines socio-economic data from the 1990 US Census, law enforcement data from the 1990 US LEMAS survey, and crime data from the 1995 FBI UCR \citep{redmond2002data}. 
Eventually, the Blog-Feedback data-set is employed  \citep{buza2014feedback} \footnote{Source:
\href{https://archive.ics.uci.edu/ml/datasets/BlogFeedback}{https://archive.ics.uci.edu/ml/datasets/BlogFeedback}}.\\
 The best path-step algorithm based on the EC index is applied on the first three data-sets (Hitters, Steps and LSVT Voice Rehabilitation), to select the appropriate subset of predictors for explaining a given variable, while the version of the algorithm based on $\bar{R}^2$ is performed to select the appropriate set of regressors of a linear model for the other three data-sets .   
The effectiveness of the BPA based on the EC index has been assessed by comparing its results with those obtainable by employing the variable ranking approach employing the mRMRe algorithm. The latter is an appealing filter approach based on both mutual information and redundancy maximum relevance (mRMRe)\citep{battiti1994using, kwak2002input, kratzer2018varrank}.
The prediction performance of the variables selected via the best path-algorithm based on $\bar{R}^2$  has been compared with that of the predictors  detected by employing the Elastic net method. The latter, combining  the ridge  \citep{gruber2017improving} and the lasso method, \citep{tibshirani1997lasso}, is a regularized regression which performs variable selection  \citep{zou2005regularization}.
\newpage

%%%%%%%%%%%%%%%%%%%%%%
\subsection{Hitters data-set}
%%%%%%%%%%%%%%%%%%%%%%
This data-set registers $263$ observations for $20$ variables of which three of them are dichotomous while the others are continuous. The BPA based on EC is here employed to find the best appropriate subset of variables that is capable to explain the baseball players' salary in 1987, according to multiple performance statistics registered in 1986. Thus, the algorithm  is implemented for the node associated to the ``Salary'' variable\footnote{1987 annual salary on opening day in thousands of dollars.}. 
Figure~\ref{fig2} shows the conditional relationships among the variables of the data-set identified by the algorithm.

\begin{figure}[H]\par\medskip
\centering
\includegraphics[scale=0.60]{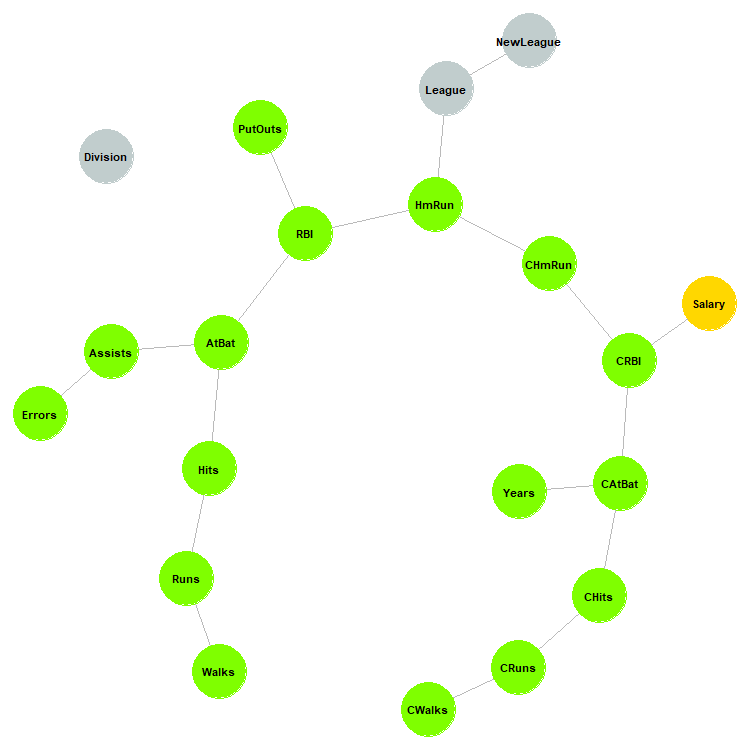}
\caption{The minimal BIC forest for the ``Salary'' variable in the Hitters data-set. Grey nodes denote discrete variables, green nodes denote continuous variables while the yellow node indicates the variable of interest}
\label{fig2}
\end{figure}
Figure~\ref{fig2} shows that the optimal forest is composed of two components: the isolated ``Division''\footnote{A factor with levels E and W indicating player's division at the end of 1986} node, and another big component including all the other variables of the data-set. The ``Salary'' node is directly connected with the ``CRIB'' node, representing the number of runs batted in during a players career. 
Distances and  path-steps  are computed starting from the ``Salary'' node (see Definitions~\ref{distance} and~\ref{path-s}, respectively).
Since the first path-step, $w_1$, turns out to be  composed only of one variable, and the computation of EC  requires more than one regressor \citep{eshima2007entropy}, the search of the best path-step starts form the second path-step $w_2$.  
It results that there are eight possible path-steps that must be explored to find the one with the most explanatory power for ``Salary'' and, among them,  the distance between ``Salary'' and ``Walks''\footnote{Number of walks in 1986.}, equal to $d^{(8)}$, is the maximum one, 

\begin{table}[ht] 
\caption{path-steps' Entropy Coefficient for ``Salary'' in Hitters}  % title of Table 
\centering        % used for centering table
\begin{tabular}{c c }  
%inserts double horizontal lines 
path-steps $w_i$ &   Entropy Coefficient (EC) \\  [0.5ex] 
% inserts table %heading 
\hline  \hline
$w_1$ & -   \\    
$w_2$ & 0.0192\\ 
$w_3$ & 0.0393\\ 
$w_4$ & 0.0438\\
$w_5$ & 0.0449\\ 
$w_6$ & 0.0442\\
$w_7$ & 0.0425\\
$w_8$ & 0.0423\\  [1ex] 
\hline   \hline   
\end{tabular} 
\label{tab1}  
\end{table} 
Table~\ref{tab1}, shows the Entropy Coefficients for each path-step and the algorithm  identifies
$w_5$ as the best one, given that it has the highest EC coefficient.  Table~\ref{tab2} reports the variables  belonging to the optimal path-step and shows the result of the Kraskov's test, which assesses the independence of each variable belonging to $w_5$ with the variable ``Salary". Indeed, once the best path-step has been detected  using the $EC$ index, the Kraskov's test is implemented to select - within this optimal path-step- only the variables which have a significant role for ``Salary".  
In the following, $w_{f}$ will denote the final set of significant variables included in the best path-step. 

\begin{table}[ht] 
\caption{Kraskov's mutual information test of independence between ``Salary" and $X_{w_5}$. The null of the test assumes independence between each variable in $w_5$ and the varaible ``Salary". } 
\centering        % used for centering table
\begin{tabular}{c c }  
%inserts double horizontal lines 
Variables at  $w_5$ &   $p-$value \\  [0.5ex] 
% inserts table %heading 
\hline  \hline
CRBI     & 0.01 \\
CAtBat   & 0.01 \\
CHmRun   & 1.00 \\
Years    & 0.98 \\
CHits    & 0.01 \\
HmRun    & 1.00 \\
CRuns    & 0.01 \\
RBI      & 1.00 \\
League   & 0.18 \\
CWalks   & 0.01 \\
AtBat    & 1.00 \\
PutOuts  & 0.84 \\
NewLeague & 0.75\\  [1ex] 
\hline   \hline   
\end{tabular} 
\label{tab2}  
\end{table} 

Figure~\ref{fig3} shows the empirical densities of the variable ``Salary'' conditioned on each variable included in $w_{5}$ which, according to the Kraskov's test, proves to be significant for its explanation, together with  the ``Salary'' density obtained by conditioning the same on the set of variables belonging to $w_{f}$. This density, denoted $f(Salary|w_f$), is computed as in Eq.~\ref{fy}. Figure~\ref{fig4} compares the ``Salary'' empirical density with  $f(Salary|w_f)$.

\begin{figure}[H]\par\medskip
    \centering
    \includegraphics[scale=0.65]{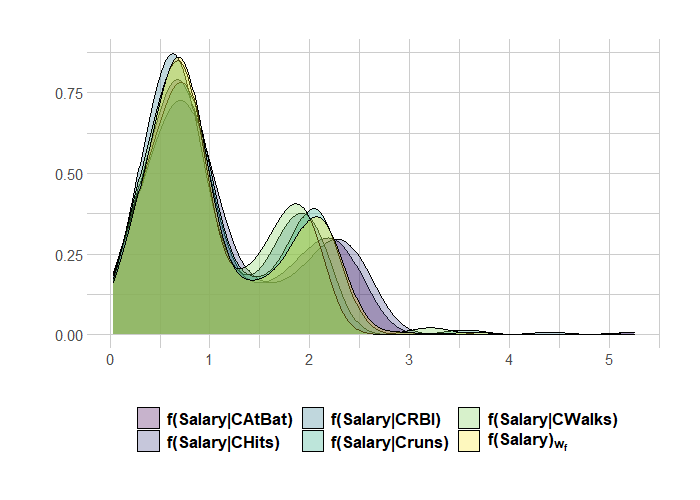}
    \caption{Empirical densities of  ``Salary'' conditioned on  each variables belonging to $w_f$ together with  $f(Salary|w_f)$}
    \label{fig3}
\end{figure} 
\begin{figure}[H]\par\medskip
    \centering
    \includegraphics[scale=0.65]{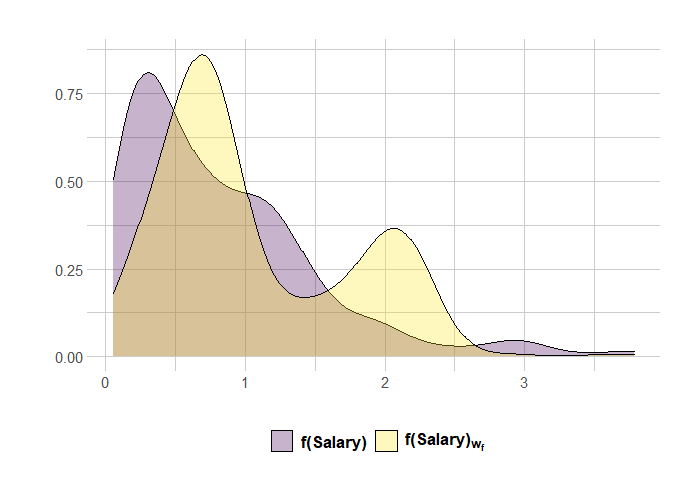}
    \caption{Empirical density of ``Salary'' and  $f(Salary|w_f)$ }   
    \label{fig4}
\end{figure}

%%%%%%%%%%%%%%%%%%%%%%%%%%%%%%%%%
\subsection{Steps data-set}
%%%%%%%%%%%%%%%%%%%%%%%%%%%%%%%%%
The data-set is composed of $331$ observations and $78$ variables, which are the information obtianed from a pedometer, such as the variable ``Calories", consumed during different activities, the distance covered during a walk, the fats burned and so on. The data-set is available in R within the library("‘TeachingDemos")\footnote{Source:
\href{https://cran.r-project.org/web/packages/TeachingDemos/index.html}{https://cran.r-project.org/web/packages/TeachingDemos/index.html}}. 
The BPA, based on the EC index, is here employed to find the best
appropriate subset of variables that is capable to explain the ``Calories" consumed during the 11 months of pedometer activation. Figure~\ref{fig5} shows the minimal BIC forest for the Steps data-set. The forest is composed of several isolated nodes and one big component including the node of interest ``Calories". By computing distances and path-steps starting from the ``Calories" node, in accordance to Definitions~\ref{distance} and~\ref{path-s},  it emerges that there are 20 potential subsets of variables to consider for feature selection. According to Table~\ref{tab3}, that  shows the Entropy Coefficients for each path-step, $w_2$ is the best one given that it has the highest  EC coefficient.
\begin{figure}[H]\par\medskip
\centering
\includegraphics[scale=0.65]{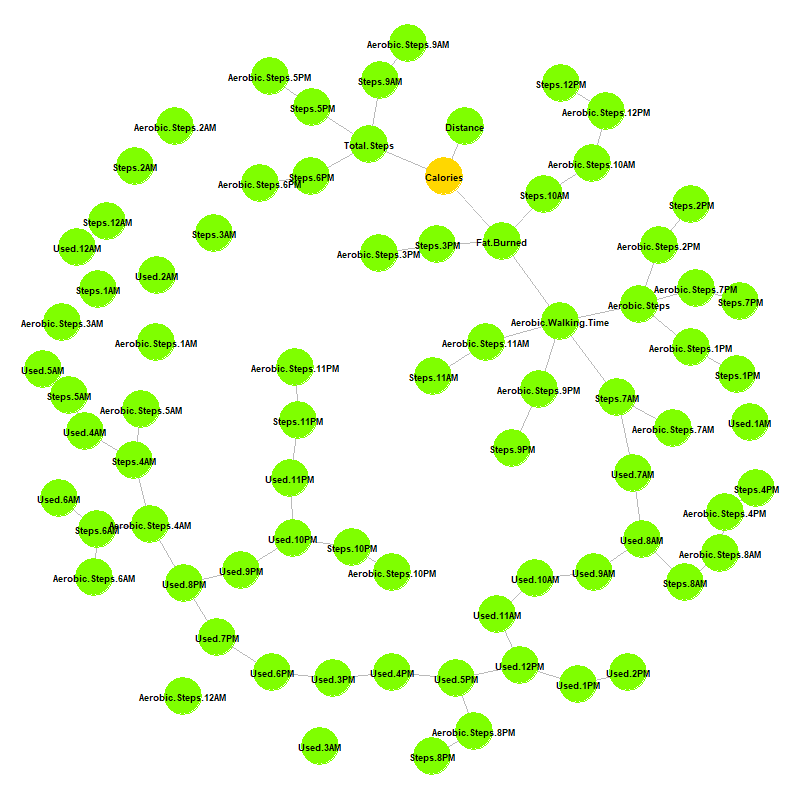}
\caption{The minimal BIC forest for the Steps data-set. Continuous variable are represented by green nodes while the variable of interest is the yellow node. }
\label{fig5}
\end{figure}

\begin{table}[ht] 
\caption{path-steps' Entropy Coefficient for ``Calories'' in Steps}  % title of Table 
\centering        % used for centering table
\begin{tabular}{c c c c }  
%inserts double horizontal lines 
path-steps $w_i$ &   Entropy Coefficient (EC) & path-steps $w_i$ &   Entropy Coefficient (EC)\\  [0.5ex] 
% inserts table %heading 
\hline  \hline
$w_1$	&	0.0260	&	$w_{11}$	&	0.0778	\\
$w_2$	&	0.1643	&	$w_{12}$	&	0.0751	\\
$w_3$	&	0.1385	&	$w_{13}$	&	0.0736	\\
$w_4$	&	0.1139	&	$w_{14}$	&	0.0723	\\
$w_5$	&	0.1011	&	$w_{15}$	&	0.0709	\\
$w_6$	&	0.0970	&	$w_{16}$	&	0.0690	\\
$w_7$	&	0.0931	&	$w_{17}$	&	0.0666	\\
$w_8$	&	0.0892	&	$w_{18}$	&	0.0614	\\
$w_9$	&	0.0860	&	$w_{19}$	&	0.0597	\\
$w_{10}$	&	0.0825	&	$w_{20}$	&	0.0579	\\[1ex] 
\hline   \hline   
\end{tabular} 
\label{tab3}  
\end{table} 

Table~\ref{tab4} reports the variables belonging to the optimal path-step and the results of the Kraskov test for each of them. After removing the variables that are not statistically significant for ``Calories", the final subset of variables turns out to include``Distance", ``Fat Burned" and ``Total Steps". Figure~\ref{fig6} shows the empirical densities of the variable ``Calories'' conditioned on each of these variables, together with  the density obtained of `Calories'' once this variable is conditioned on the set of the significant variables included in  $w_f$. The latter density, denoted by  $f(Calories|w_f)$, is computed as in Eq.~\ref{fy}.  
%We notice that the densities ``Calories'' given
`%`Distance" and ``Calories'' given ``Total Steps" are  overlapping, while the density ``Calories'' given ``Fat Burned" is on the right respect to the other densities.
Figure~\ref{fig7} compares the  empirical density of ``Calories'' together with $f(Calories|w_f)$.
%Despite this, we don't have a model to explain the variable ``Calories'', with this appropriate variable selection we can build a model to explain and predict the variable ``Calories''.
\begin{table}[ht] 
\caption{\citet{kraskov2004estimating}  mutual information test of independence between ``Calories" and $X_{w_2}$. The null of the test assumes independence between each variable in $w_2$ and ``Calories". } 
\centering        % used for centering table
\begin{tabular}{c c }  
%inserts double horizontal lines 
Variables at  $w_2$ &   $p-$value \\  [0.5ex] 
% inserts table %heading 
\hline  \hline
Total.Steps	&	0.01	\\
Distance	&	0.01	\\
Fat.Burned	&	0.01	\\
Steps.9AM	&	1.00	\\
Steps.5PM	&	1.00	\\
Steps.6PM	&	0.94	\\
Aerobic.Walking.Time	&	1.00	\\
Steps.10AM	&	0.84	\\
Steps.3PM	&	1.00	\\[1ex] 
\hline   \hline   
\end{tabular} 
\label{tab4}  
\end{table}

\begin{figure}[H]\par\medskip
    \centering
    \includegraphics[scale=0.65]{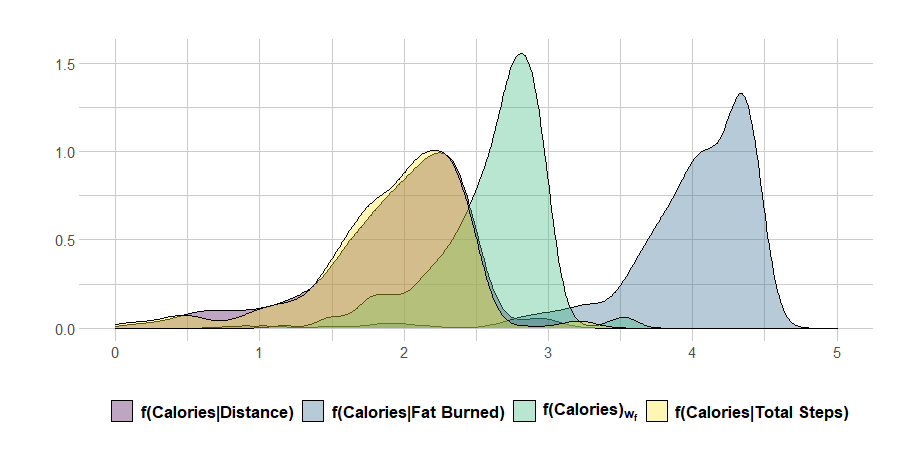}
    \caption{Empirical densities of  ``Calories'' conditioned on  each variables belonging to $w_f$ together with  $f(Calories|w_f)$}
    \label{fig6}
\end{figure} 
\begin{figure}[H]\par\medskip
    \centering
    \includegraphics[scale=0.65]{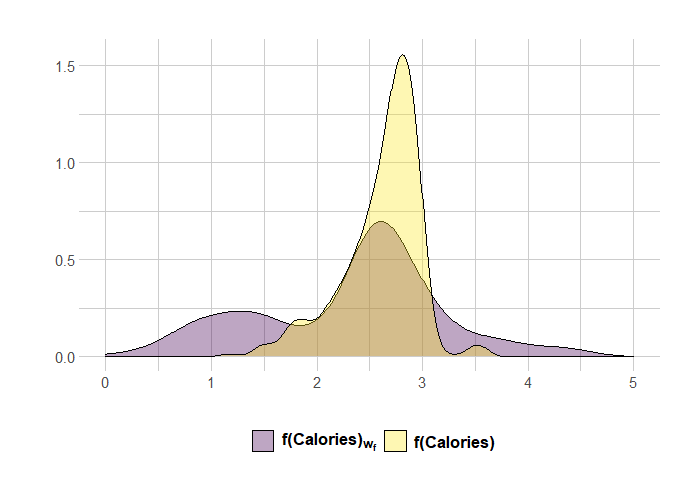}
    \caption{ Empirical density of ``Calories'' and  $f(Calories|w_f)$ } 
    \label{fig7}
\end{figure}
%%%%%%%%%%%%%%%%%%%%%%%%%%%%%%
\subsection{LSVT Voice Rehabilitation  data-set}
%%%%%%%%%%%%%%%%%%%%%%%%%%%%%%%%%
The third application refers to the data-set built by Athanasios of the University of Oxford. It includes clinical information from speech signals  provided by LSVT Global, a company specialised in voice rehabilitation. The original study used 309 algorithms to characterize 126 speech signals from 14 people,
%a robust feature selection mechanism
to determine the most parsimonious feature subset of variables needed 
%, and Support Vector Machines (SVM) and Random Forests (RF) 
to predict the binary response indicating the phonation quality during rehabilitation (acceptable vs unacceptable). Both cross-validation (10-fold cross-validation with 100 repetitions for statistical confidence) and leave one subject out methods were used for the validation of the findings. In both cases, they demonstrated a near 90\% accurate replication of the clinicians' assessment \citep{tsanas2013objective, tsanas2011nonlinear}. In detail, the data-set is composed of 126 observations and 314 variables of which 310 are features coming from the algorithms employed to characterize speech signals,  3 are  general demographics (participant id, age and gender) and one represinting the target variable ``Binary Response". The latter assumes the value 1 if the phonation quality is acceptable and 0 otherwise.\\ 
The BPA based on the EC index is here employed to find the best appropriate subset of variables that is capable of explaining the variable "Binary Response", according to the multiple features available in the data-set. The aim of this application is twofold. First,  the BPA  is here applied to prove that it works also when the variable of interest is  dichotomous.
%expanding the application of the best path-step Algorithm based on EC also in the classification contest. 
Second, it is proved that the algorithm based on the EC index does not depend on the number of predictors and, in particular, it works even when the latter is higher than the sample size. \\ Figure~\ref{fig8} shows the conditional relationships among the features of the data-set identified by the algorithm. The optimal forest showed in Figure~\ref{fig8} consists of two components, a big one including the node of interest and another one composed of eight node representing the general demographics variables. 
%As explained in the section~\ref{3}, our interest is captured by the component where the node of interest belongs, and from the structure of the tree, we implement the strategy to search for the appropriate subset of the variables to explain the ``Binary Response". 
After computed distances and path-steps starting from the ``Binary Response'' node, it results that  fifty-nine are the possible path-steps to explore in order to find the one including the variables with the most explanatory power for the ``Binary Response''.  
\begin{figure}[H]\par\medskip
\centering
\includegraphics[scale=0.65]{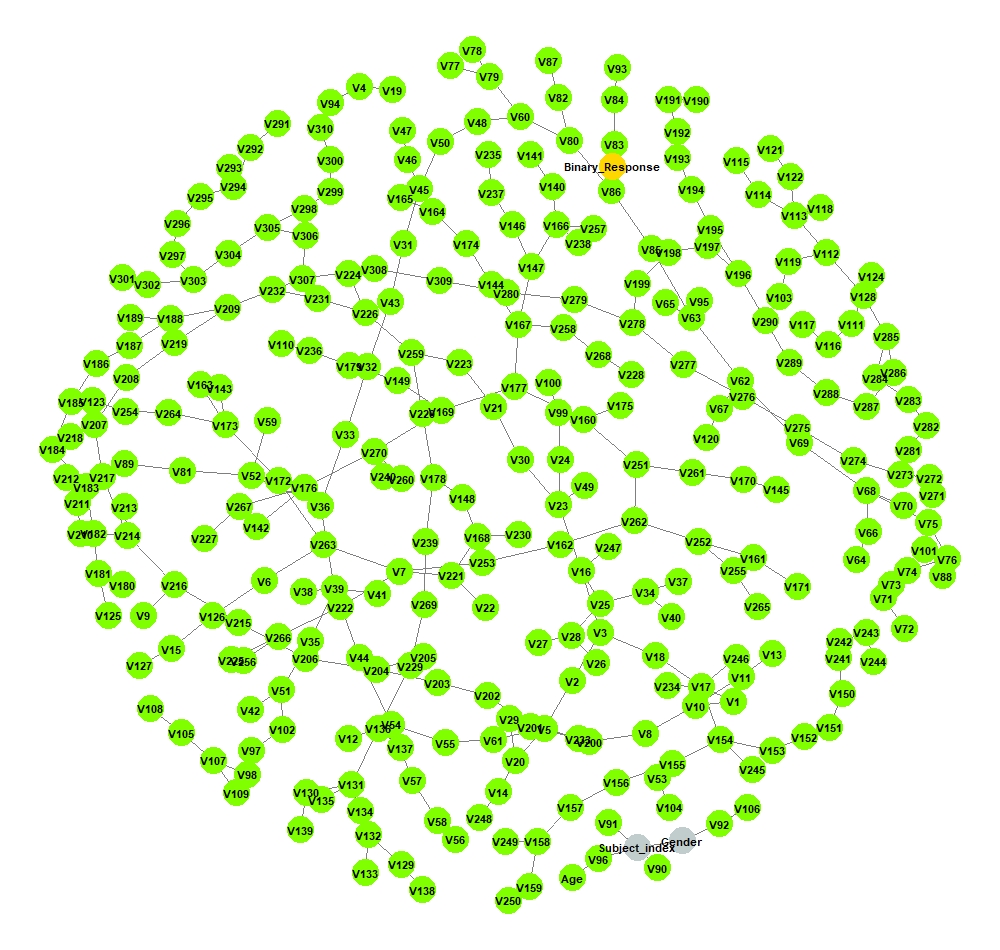}
\caption{The minimal BIC forest for the LSVT Voice Rehabilitation data-set. Continuous variable are green while the variable of interest ``Binary Response" is represented by the yellow node, while factors variables are
\label{fig8}
the grey nodes.}
\end{figure}

\begin{table}[ht]
\scalebox{0.85}{
\caption{path-steps' Entropy Coefficient for ``Binary Response'' in LSVT Voice Rehabilitation } 
\label{tab5}  % title of Table 
\centering        % used for centering table
\begin{tabular}{c c c c c c c c }  
%inserts double horizontal lines 
path- &  Entropy & path- &  Entropy & path- &  Entropy & path- &  Entropy \\
steps $w_i$ & Coefficient (EC) & steps $w_i$ & Coefficient (EC) & steps $w_i$ & Coefficient (EC) & steps $w_i$ & Coefficient (EC) \\  [0.5ex]
% inserts table %heading 
\hline  \hline
$w_{1}$	&	-	&	$w_{16}$	&	0.025	&	$w_{31}$	&	0.025	&	$w_{46}$	&	0.021		\\
$w_{2}$	&	0.031	&	$w_{17}$	&	0.025	&	$w_{32}$	&	0.024	&	$w_{47}$	&	0.021		\\
$w_{3}$	&	0.033	&	$w_{18}$	&	0.025	&	$w_{33}$	&	0.024	&	$w_{48}$	&	0.021		\\
$w_{4}$	&	0.032	&	$w_{19}$	&	0.025	&	$w_{34}$	&	0.023	&	$w_{49}$	&	0.021	\\
$w_{5}$	&	0.031	&	$w_{20}$	&	0.025	&	$w_{35}$	&	0.023	&	$w_{50}$	&	0.021		\\
$w_{6}$	&	0.030	&	$w_{21}$	&	0.025	&	$w_{36}$	&	0.023	&	$w_{51}$	&	0.021		\\
$w_{7}$	&	0.028	&	$w_{22}$	&	0.025	&	$w_{37}$	&	0.023	&	$w_{52}$	&	0.021		\\
$w_{8}$	&	0.027	&	$w_{23}$	&	0.026	&	$w_{38}$	&	0.022	&	$w_{53}$	&	0.021		\\
$w_{9}$	&	0.027	&	$w_{24}$	&	0.026	&	$w_{39}$	&	0.022	&	$w_{54}$	&	0.021		\\
$w_{10}$ &	0.026	&	$w_{25}$	&	0.026	&	$w_{40}$	&	0.022	&	$w_{55}$	&	0.021		\\
$w_{11}$ &	0.026	&	$w_{26}$	&	0.026	&	$w_{41}$	&	0.021	&	$w_{56}$	&	0.021		\\
$w_{12}$ &	0.027	&	$w_{27}$	&	0.026	&	$w_{42}$	&	0.021	&	$w_{57}$	&	0.021		\\
$w_{13}$ &	0.025	&	$w_{28}$	&	0.025	&	$w_{43}$	&	0.021	&	$w_{58}$	&	0.021		\\
$w_{14}$ &	0.026	&	$w_{29}$	&	0.025	&	$w_{44}$	&	0.021	&	$w_{59}$	&	0.021		\\
$w_{15}$ &	0.025	&	$w_{30}$	&	0.025	&	$w_{45}$	&	0.021	&			&			\\[1ex] 

\hline   \hline   
\end{tabular}
}

\end{table} 

Table~\ref{tab5}, which provides the Entropy Coefficients for each path-step, identifies $w_3$ as the best path-step given that it has the highest EC index. Table~\ref{tab6} reports the variables belonging to the optimal path-step and shows the result of Kraskov's test, which assesses the independence of each variable belonging to $w_5$ with the variable ``Binary Response". %Indeed, once the best path-step has been detected by using $EC$, Kraskov's test is implemented to select - within this optimal path-step- only the variables which have a significant role in the variable of interest.  

\begin{table}[H]\par\medskip
\caption{ Kraskov's mutual information test of independence between ``Binary Response"" and each variable included in  $w_3$.}
%The null of the test assumes independence between each variable in $w_3$ and the ``Binary Response". } 
\centering        % used for centering table
\begin{tabular}{c c }  
%inserts double horizontal lines 
Variables at  $w_5$ &   $p-$value \\  [0.5ex] 
% inserts table %heading 
\hline  \hline
$V_{86}$ & 1.00 \\
$V_{80}$  & 0.02 \\ 
$V_{83}$ & 1.00 \\ 
$V_{85}$ & 0.82 \\ 
$V_{60}$ & 0.01 \\
$V_{82}$ & 0.02 \\ 
$V_{84}$ & 1.00 \\
$V_{63}$ & 0.98 \\[1ex] 
\hline   \hline   
\end{tabular} 
\label{tab6}  
\end{table} 

Figure~\ref{fig9} depicts the empirical densities of the variable ``Binary Response'' conditioned on each variable included in $w_{f}$ the set which, according to the Kraskov's test, proves to be significant for its explanation, together with the `Binary Response" density computed as in Eq.~\ref{fy} ( $f(\text{Binary Response}|w_f)$). Figure~\ref{fig10} compares the ```Binary Response'' empirical density, where 1 corresponds to ``acceptable" and 2 is ``unacceptable",  with the  $f(\text{Binary Response}|w_f)$,  where  the class  "acceptable" is the set of values for which $f(\text{Binary Response}|w_f) \leq  0.5$ and the class ``unacceptable" the set for which $f(\text{Binary Response}|w_f)>0.5$. 

\begin{figure}[H]\par\medskip
    \centering
    \includegraphics[scale=0.65]{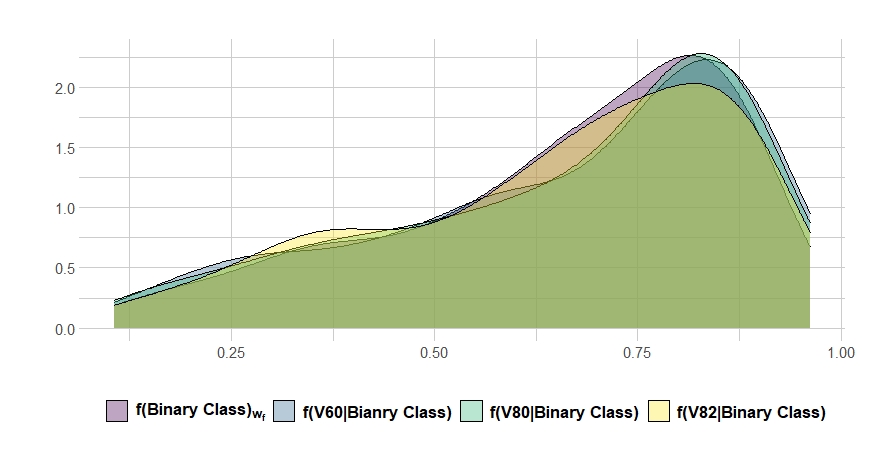}
    \caption{Empirical densities of  ``Binary Response'' conditioned on  each variables belonging to $w_f$ together with   $f(\textit{\text{Binary Response}}|w_f)$}
    \label{fig9}
\end{figure} 
\begin{figure}[H]\par\medskip
    \centering
    \includegraphics[scale=0.65]{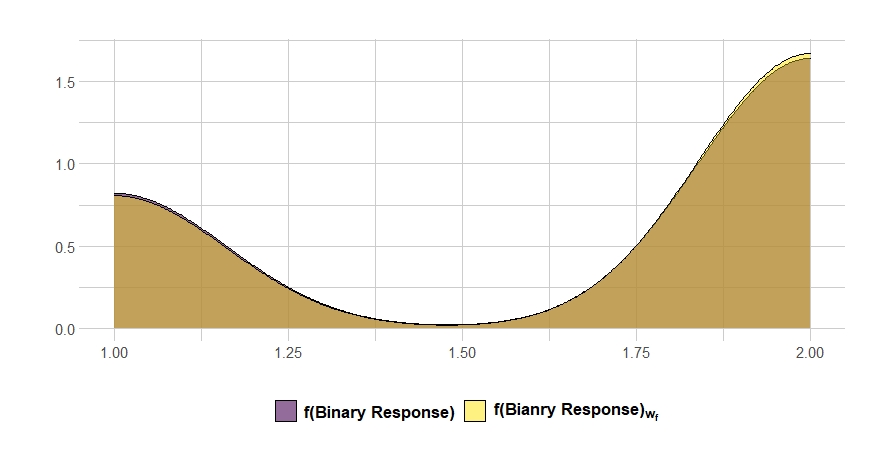}
    \caption{ Empirical density of ```Binary Response'' and  $f(\textit{\text{Binary Response}}|w_f)$ }
    \label{fig10}
\end{figure}
%%%%%%%%%%%%%%%%%%%%%%%%%%%%%%
\subsection{Prostate cancer data-set}
%%%%%%%%%%%%%%%%%%%%%%%%%%%%%%%%%
The fourth data-set comes from the \citet{stamey1989prostate} study of prostate cancer and includes a set of variables useful to investigate the association between a prostate-specific antigen (PSA) and a number of prognostic clinical measurements useful to detect advanced prostate cancer.
The data-set is composed of nine variables observed on $97$ patients. The predictors, which are commonly considered \citep{stamey1989prostate}, are eight clinical measures: ``lcavol''(log of cancer volume, measured in milliliters),     ``lweight'' (log of prostate weight,measured in grams), ``age'' (age of the patient in years), ``lbph'' (log of the amount of benign prostatic hyperplasia, reported in $cm^2$), ``svi'' (presence or absence of seminal vesicle invasion), ``lcp'' (log of the capsular penetration,in cm), ``gleason'' (gleason score, a measure of the degree of aggressiveness of the tumor), and ``pg45'' (percentage of Gleason scores). The variable of interest is ``lpsa'' and it represents the log of PSA measured in ng/m. 

\begin{figure}[H]\par\medskip
\centering
\includegraphics[scale=0.40]{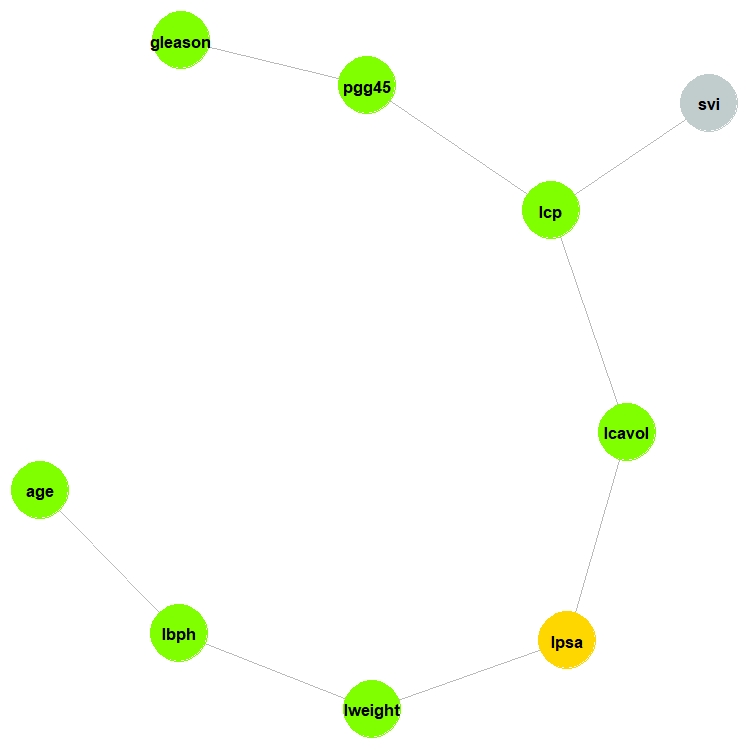}
\caption{The minimal BIC tree for the prostate cancer data-set shows the discrete variable ``svi'' with a grey node, the continuous variables in green and the variable of interest ``lpsa'' in yellow.} 
\label{fig11}
\end{figure}

Figure~\ref{fig11} shows the dependence relationships among the variables of the prostate cancer data-set through a minimal BIC tree. The node of interest ``lpsa'', is connected directly with the node ``lcavol'' and ``lweiht''. Using Definition~\ref{path-s}, $4$ path-steps can be identified. They are shown in  Table~\ref{tab3} together with $\bar{R}^2$ and MSE that would result if the variables included in each path-step were employed as regressors in a linear model. According to the results shown in Table~\ref{tab7}, $w_3$ turns out to be the best path-step. The variables included in $w_3$ are: ``lcavol'', ``lweight'', ``lcp", ``lbph", ``svi", ``pgg45" and ``age". 
The statistical significance analysis of these variables  leads to conclude that the best predictors for ``lpsa'' are the ones reported in Table~\ref{tab8}.  %Table~\ref{tab4}, indeed, shows the final model for the variable of interest.
Figure \ref{fig12} shows the empirical density of "lpsa" together with the one estimated from the linear model having as regressors the significant variables included in the optimal path-step, denoted by "est. dens lpsa" hereafter.

\begin{table}[htbp!] 
\caption{$\bar{R}^2$ and MSE for each path-steps for the ``lpsa'' variable of interest}\label{tab7}    % title of Table 
\centering        % used for centering table
\begin{tabular}{l c c }  
%inserts double horizontal lines 
path-steps $w_i$ &   $\bar{R}^2$ & MSE \\  [0.5ex] 
% inserts table %heading 
\hline  \hline
$w_1$ & 0.64 & 0.73\\    
$w_2$ & 0.63 & 0.74\\ 
$w_3$ & 0.67 & 0.70\\ 
$w_4$ & 0.66 & 0.71\\  %[1ex] 
\hline   \hline   
\end{tabular} 
\end{table}

\begin{table}[htbp!]
\caption{OLS regression for  ``lpsa'' with the selected predictors }
\label{tab8}
\begin{tabular}{lccccc}
Coefficients 	 	&	Estimate	&	Std. Error	&	t-value	&	$Pr(>|t|)$	&	Signif.	\\[0.5ex]
\hline	\hline				
$\beta_0$		&	-0.777	&	0.623	&	-1.25	&	0.215	&		\\
$\beta_{lcavol}$		&	0.526	&	0.075	&	7.02	&	3.5e-10	&	***	\\
$\beta_{lweight}$		&	0.662	&	0.176	&	3.77	&	0.00029	&	***	\\
$\beta_{svi=1}$		&	0.666	&	0.207	& 3.21	&	0.00180	&	**	\\
\multicolumn{6}{c}{$\bar{R}^2=0.636$;   Signif.   Codes: 0 ‘***’ 0.001 ‘**’   0.01 ‘*’ 0.05 ‘.’ 0.1}\\[0.1ex]  
\hline\hline 
\end{tabular}
\end{table}

\begin{figure}[H]\par\medskip
\centering
\includegraphics[scale=0.60]{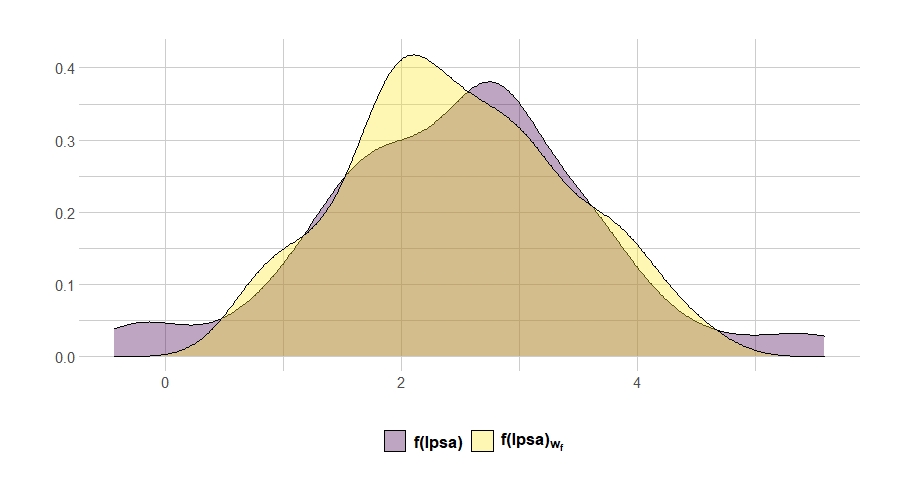}
\caption{Empirical density of ``lpsa'' and $f(lpsa|w_f)$}
%one estimated from the model having as regressors the statistical significant variables belonging to the BPA, reported in Table~\ref{tab4}}
\label{fig12}
\end{figure}

%%%%%%%%%%%%%%%%%%%%%%%%%%
\subsection{Communities and Crime data-set}
%%%%%%%%%%%%%%%%%%%%%%%%%%
 The fifth application employs the Communities and Crime data-set, that combines data from different sources \citep{redmond2002data}.
 As it contains more than $100$ variables, and precisely $128$ variables and $1994$ observations, a manual variable selection can go beyond the cognitive abilities of researchers and create fertile ground for scientific malpractices like reverse p-hacking \citep{head2015extent, chuard2019evidence}, and generating ex-post justified opaque models influenced by cognitive biases.
 After removing variables with too many missing data, the final data-set turns out to be composed of $102$ variables and $1994$ observations. \citet{redmond2002data} proposed a data driven approach to help police departments in developing a strategic approach for avoiding crimes. The target variable of this strategy is ``Violent crime per population'' that represents the total number of violent crimes per $100K$ population \citep{us20032000}. Using the BPA based on $\bar{R}^{2}$ we aim at selecting the appropriate explanatory variables to use as regressors in an econometric model explaining the variable ``Violent crime per population''. Figure~\ref{fig13} shows the minimal BIC forest built on the Communities and Crime data-set, while the variable label correspondence is shown in Table~\ref{NameVar}. The forest in Figure~\ref{fig13} is composed of one isolated nodes and one big component. The isolate node correspond to the variable ``fold", which denotes the number of non-random 10 fold cross-validation, potentially useful for debugging, paired tests. The node of interest ``Violent crime per population'' (node 102), belongs to the big component. This means that only the variables included in this component must be considered as its potential regressors in a linear model. The best path-step can be identified by simply computing $\bar{R}^2$ of  models including as regressors the variables of each path-step of the big component of the forest, starting from the node of interest.
\begin{figure}[H]\par\medskip
\centering
\includegraphics[scale=0.55]{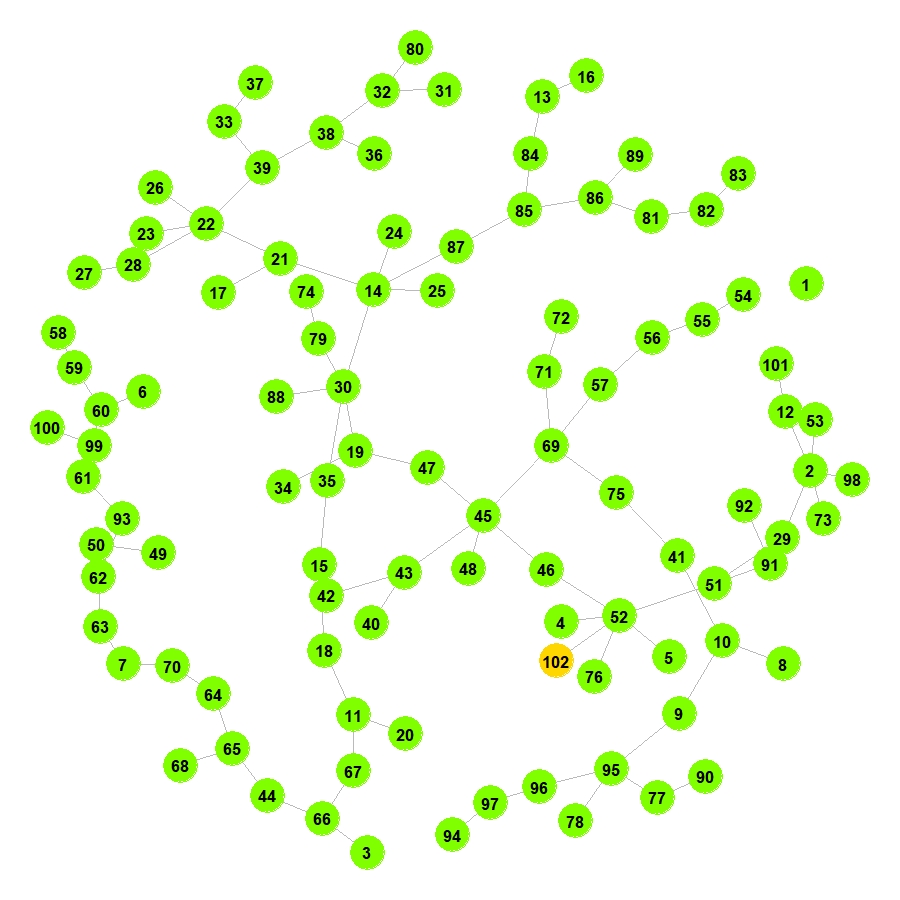}
\caption{The minimal BIC tree for the Communities and Crime data-set. 
Continuous variables are in green while the variable of interest ``Violent crime per Population'' is the yellow node. }
\label{fig13}
\end{figure}
According to Definition~\ref{path-s}, $24$ are the potential subsets of variables (path-steps) that can explain ``Violent crime per population". Table~\ref{tab3} shows both MSE and $\bar{R}^2$ of models that have the variables of each path-step as regressors. As we can see from Table~\ref{tab9}, the  algorithm identifies the path-step 20 as the best one. 
The final output of the algorithm, obtained by testing the statistical significance of the single variables of the optimal path-step,  is reported in Table~\ref{tab10}. As we can see the algorithm suggests using 23 variables to explain ``Violent crime per population". Some of them, such as
``PctKids2Par", the percentage of kids in family housing with two parents,
`` NumImmig", the total number of people known to be foreign born,
"PctPopUnderPov", the percentage of people under the poverty level
and
``PctWorkMom",  the percentage of moms of kids under 18 age, have a negative effect on ``Violent crime per population".
Others, such as the
``PctIlleg", the percentage of kids born to never married,
``PctVacantBoarded", the percent of vacant housing that is boarded up and
``NumStreet", the number of homeless people counted in the street have  a positive effect \footnote{Source: \href{https://archive.ics.uci.edu/ml/datasets/communities+and+crime}{https://archive.ics.uci.edu/ml/datasets/communities+and+crime}}
Figure ~\ref{fig14} shows the empirical density of the variable ``Violent crime per population" together with the density of the same once it is  estimated from the linear model whose regressors are
the significant variables of the best path-step, this density is denoted by $f( \text{Violent Crime per Population}|w_f)$ hereafter.

\begin{table}[H]\par\medskip
\centering
\begin{tabular}{l c c c c c}
path-steps $w_i$ &   $\bar{R}^2$ &  MSE  &  path-steps $w_i$ &   $\bar{R}^2$ & MSE \\[0.5ex]
\hline \hline
$w_1$	&	0.5474	&	0.1569	&	$w_{13}$	&	0.6544	&	0.1370	\\
$w_2$	&	0.6204	&	0.1438	&	$w_{14}$	&	0.6542	&	0.1370	\\
$w_3$	&	0.6234	&	0.1432	&	$w_{15}$	&	0.6554	&	0.1368	\\
$w_4$	&	0.6336	&	0.1413	&	$w_{16}$	&	0.6572	&	0.1364	\\
$w_5$	&	0.6447	&	0.1391	&	$w_{17}$	&	0.6567	&	0.1366	\\
$w_6$	&	0.6441	&	0.1392	&	$w_{18}$	&	0.6569	&	0.1365	\\
$w_7$	&	0.6429	&	0.1394	&	$w_{19}$	&	0.6571	&	0.1365	\\
$w_8$	&	0.6429	&	0.1394	&	$w_{20}$	&	0.6601	&	0.1359	\\
$w_9$	&	0.6453	&	0.1389	&	$w_{21}$	&	0.6599	&	0.1359	\\
$w_{10}$	&	0.6504	&	0.1379	&	$w_{22}$	&	0.6597	&	0.1360	\\
$w_{11}$	&	0.6564	&	0.1367	&	$w_{23}$	&	0.6599	&	0.1359	\\
$w_{12}$	&	0.6559	&	0.1367	&	$w_{24}$	&	0.6593	&	0.1360	\\
[1ex] 
\hline   \hline   
\end{tabular}
\caption{$\bar{R}^2$ and MSE for each path-steps considering as variable of interest  ``Violent crime per population''}
\label{tab9}
\end{table}

\begin{table}[H]\par\medskip
\begin{tabular}{lcccccc}

Coefficients 	&	Label Node 	&	Estimate	&	Std. Error	&	t-value	&	$Pr(>|t|)$	&	Signif.	\\[0.5ex]
\hline	\hline
$\beta_0$	&	-	&	0.459	&	0.075	&	6.083	&	1.42E-09	&	***	\\
$\beta_{PctIlleg}$	&	52	&	0.173	&	0.038	&	4.499	&	7.22e-06	&	***	\\
$\beta_{racepctblack}$	&	4	&	0.196	&	0.026	&	7.403	&	1.96e-13	&	***	\\
$\beta_{PctKids2Par}$	&	46	&	-0.335	&	0.063	&	-5.354	&	9.59e-08	&	***	\\
$\beta_{PctVacantBoarded}$	&	76	&	0.039	&	0.018	&	2.095	&	0.036313	&	*	\\
$\beta_{NumStreet}$	&	92	&	0.210	&	0.043	&	4.882	&	1.14e-06	&	***	\\
$\beta_{MalePctDivorce}$	&	40	&	0.113	&	0.032	&	3.498	&	0.000479	&	***	\\
$\beta_{NumImmig}$	&	53	&	-0.181	&	0.058	&	-3.118	&	0.001849	&	**	\\
$\beta_{HousVacant}$	&	73	&	0.178	&	0.030	&	6.043	&	1.80e-09	&	***	\\
$\beta_{PctPopUnderPov}$	&	30	&	-0.124	&	0.038	&	-3.235	&	0.001235	&	**	\\
$\beta_{PctEmploy}$	&	35	&	0.103	&	0.053	&	1.96	&	0.050167	&	.	\\
$\beta_{MedRent}$	&	87	&	0.285	&	0.054	&	5.279	&	1.44e-07	&	***	\\
$\beta_{pctWWage}$	&	15	&	-0.213	&	0.044	&	-4.79	&	1.79e-06	&	***	\\
$\beta_{whitePerCap}$	&	23	&	-0.103	&	0.032	&	-3.226	&	0.001278	&	**	\\
$\beta_{RentLowQ}$	&	84	&	-0.251	&	0.051	&	-4.919	&	9.41e-07	&	***	\\
$\beta_{OtherPerCap}$	&	27	&	0.000	&	0.000	&	2.495	&	0.012665	&	*	\\
$\beta_{pctUrban}$	&	13	&	0.038	&	0.009	&	4.387	&	1.21e-05	&	***	\\
$\beta_{pctWRetire}$	&	20	&	-0.100	&	0.030	&	-3.351	&	0.00082	&	***	\\
$\beta_{MedOwnCostPctIncNoMtg}$	&	90	&	-0.084	&	0.019	&	-4.478	&	7.98e-06	&	***	\\
$\beta_{pctWFarmSelf}$	&	16	&	0.034	&	0.019	&	1.81	&	0.070391	&	.	\\
$\beta_{PctPersDenseHous}$	&	70	&	0.227	&	0.042	&	5.389	&	7.94e-08	&	***	\\
$\beta_{PctNotSpeakEnglWell}$	&	63	&	-0.116	&	0.047	&	-2.444	&	0.014626	&	*	\\
$\beta_{PctWorkMom}$	&	50	&	-0.119	&	0.024	&	-5.028	&	5.42e-07	&	***	\\
$\beta_{PctForeignBorn}$	&	93	&	0.123	&	0.038	&	3.285	&	0.001037	&	**	\\
 		\multicolumn{7}{c}{$\bar{R}^2=0.673$;   Signif.   Codes: 0 ‘***’ 0.001 ‘**’   0.01 ‘*’ 0.05 ‘.’ 0.1}\\ [0.1ex]
\hline\hline
\end{tabular}
\caption{OLS regression for ``Violent crime per population'' with selected predictors}
\label{tab10}
\end{table}

\begin{figure}[H]\par\medskip
\centering
\includegraphics[scale=0.60]{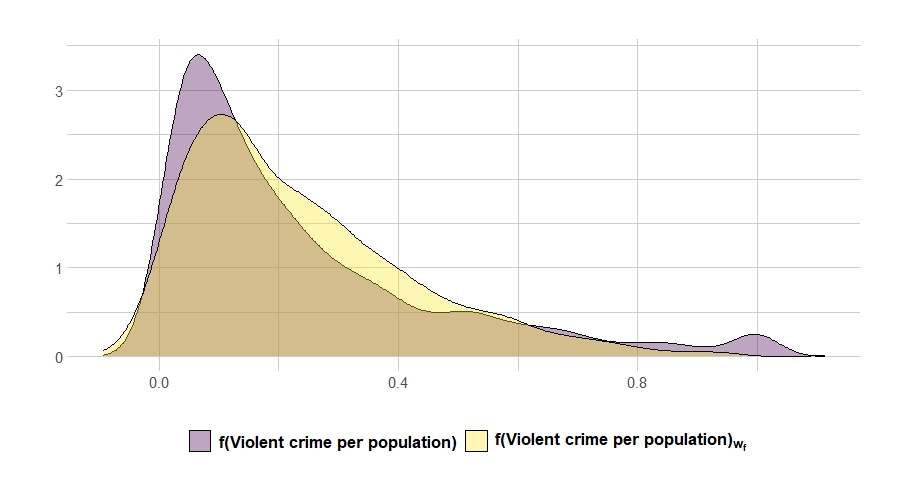}
\caption{Empirical density of ``Violent Crime per Population'' and $f( \textit{\text{Violent Crime per Population}}|w_f)$}
\label{fig14}
\end{figure}
%%%%%%%%%%%%%%%%%%%%%%%%%%%%%%%%%
\subsection{Blog-Feedback data-set}
%%%%%%%%%%%%%%%%%%%%%%%%%%%%%%%%%
The last application regards the Blog-Feedback data-set proposed by \citet{buza2014feedback}. This data-set is composed of $281$ features, including the target variable ``number of comments'' in the next 24 hours, and $52397$ observations. 
It includes also 60 test data-sets which can be employed to evaluate the goodness of models fitted to data\footnote{\href{https://archive.ics.uci.edu/ml/datasets/BlogFeedback}{https://archive.ics.uci.edu/ml/datasets/BlogFeedback}}. 
The data-set was created with the aim to simulate a real-world scenario where a model is trained to make predictions for the blog documents  published by using blog documents of the past \citep{buza2014feedback}. The specification of the variables is reported in Table~\ref{NameVar2}.

\begin{figure}[H]\par\medskip
\centering
\includegraphics[scale=0.60]{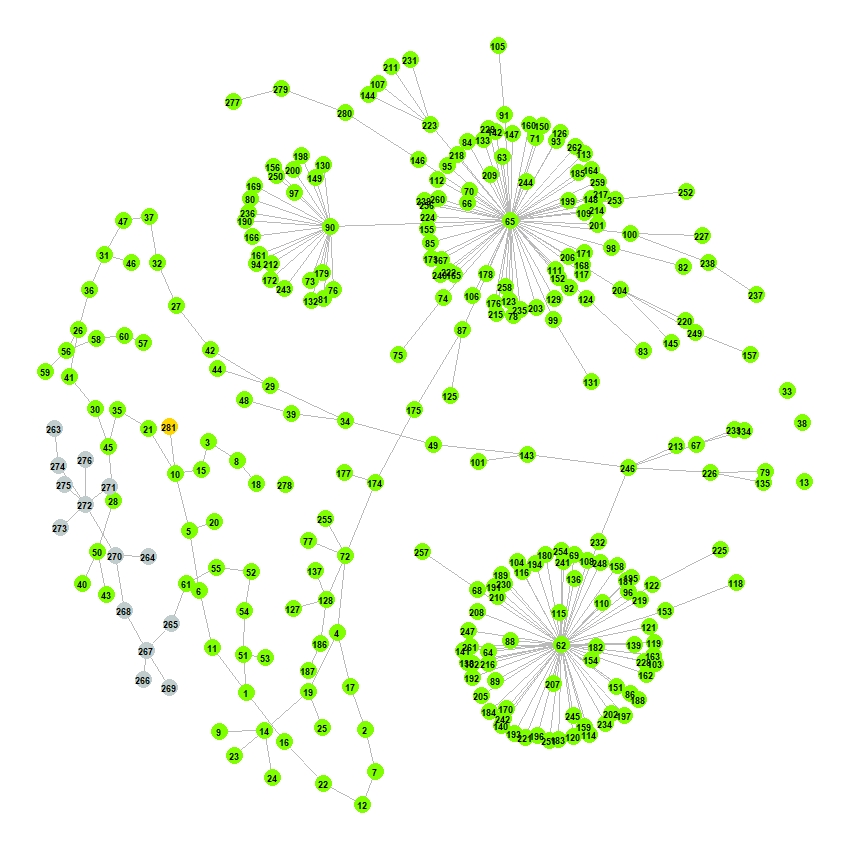}
\caption{The minimal BIC forest for the Blog-Feedback data-set. Discrete variables are shown in grey, continuous variables are in green, while the variable of interest ``number of comments'' is the yellow node.}
\label{fig15}
\end{figure}
Figure~\ref{fig15} represents the forest encoding the dependence relationships among the features of the Blog-Feedback data-set. This forest is composed of three isolated nodes (13, 38 and 278) and one big component including the target variable ``number of comments''. Following the Definition~\ref{path-s}, 23 path-steps can be identified. The latter are shown in Table~\ref{tab11} together with the $\bar{R}^2$ and MSEs of the linear models whose regressors are the variables of each potential path-step. The optimal path-step turns out to be $w_{10}$. 
After checking the statistical significance of each variable belonging to this path-step, the final set of best predictors turns out to be the one reported in in Table~\ref{tab12}.
Figure ~\ref{fig16} shows the empirical density of the variable \textit"number of Comments" together with the density of the latter estimated from a linear regression model whose regressors are the significant variables of the optimal path-step. The latter is denoted by $f(\text{ number of comments}|w_f)$ hereafter.

\begin{table}[H]\par\medskip
\centering
\begin{tabular}{
l c c c c c}
path-steps $w_i$ &   $\bar{R}^2$ & MSE  &  path-steps $w_i$ &   $\bar{R}^2$ & MSE \\[0.5ex]
\hline \hline
$w_1$	&	0.261	&	32.5	&	$w_{13}$	&	0.352	&	30.3	\\
$w_2$	&	0.264	&	32.4	&	$w_{14}$	&	0.351	&	30.4	\\
$w_3$	&	0.264	&	32.4	&	$w_{15}$	&	0.351	&	30.4	\\
$w_4$	&	0.264	&	32.4	&	$w_{16}$	&	0.351	&	30.4	\\
$w_5$	&	0.265	&	32.4	&	$w_{17}$	&	0.351	&	30.4	\\
$w_6$	&	0.266	&	32.3	&	$w_{18}$	&	0.351	&	30.4	\\
$w_7$	&	0.291	&	31.7	&	$w_{19}$	&	0.351	&	30.4	\\
$w_8$	&	0.347	&	30.5	&	 $w_{20}$	&	0.351	&	30.4	\\
$w_ 9$	&	0.347	&	30.5	&	$w_{21}$	&	0.351	&	30.4	\\
$w_{10}$	&	0.353	&	30.3	&	$w_{22}$	&	0.350	&	30.4	\\
$w_{11}$	&	0.352	&	30.3	&	$w_{23}$	&	0.350	&	30.4	\\
$w_{12}$	&	0.352	&	30.3	&		&		&		
	\\
[1ex] 
\hline   \hline   
\end{tabular}
\caption{$\bar{R}^2$ andMSE for each \textit{path-steps}; variable of interest: ``number of comment''}
\label{tab11}
\end{table}
%\textcolor{red}{The final output of the algorithm is reported in Table~\ref{tab8}. As we can see the algorithm suggests using 12 variables to explain  the target variable \textit{the number of comments}. Thank the BPA,  we have identified the main relevant variables to be employed in OLS regression to estimate  the variable of interest.}
\begin{table}[H]\par\medskip
\begin{tabular}{lcccccc}
Coefficients 	& 	Label Node &	Estimate	&	Std. Error	&	t value	&	$Pr(>|t|)$	&	Signif.\\[0.5ex]
\hline \hline
$\beta_0$	& -	&0.0386	&	0.1852	&	0.21	&	0.83	&		\\
$\beta_{6}$	& 6	&0.8184	&	0.017	&	48.23	&	$<$2e-16	&	***	\\
$\beta_{3}$	& 3	&-0.0976	&	0.0211	&	-4.62	&	3.80e-06	&	***	\\
$\beta_{22}$& 22 &	-1.0357	&	0.0637	&	-16.27	&	$<$2e-16	&	***	\\
$\beta_{26}$& 26 &	-19.9003	&	2.7327	&	-7.28	&	3.30e-13	&	***	\\
$\beta_{12}$& 12	&	1.0044	&	0.1311	&	7.66	&	1.80e-14	&	***	\\
$\beta_{52}$& 52	&	0.0717	&	0.0044	&	16.31	&	$<$2e-16	&	***	\\
$\beta_{36}$& 36 	&	58.9165	&	7.8038	&	7.55	&	4.40e-14	&	***	\\
$\beta_{56}$& 56	&	-1.3293	&	0.1426	&	-9.32	&	$<$2e-16	&	***	\\
$\beta_{7}$	& 7	&0.5189	&	0.1115	&	4.66	&	3.20e-06	&	***	\\
$\beta_{55}$& 55	&	0.117	&	0.0032	&	36.59	&	$<$2e-16	&	***	\\
$\beta_{58}$& 58	&	1.6097	&	0.236	&	6.82	&	9.10e-12	&	***	\\
$\beta_{2}$	& 2	&-0.195	&	0.0242	&	-8.06	&	8.10e-16	&	***	\\
\multicolumn{6}{c}{$\bar{R}^2=0.338$;   Signif.   Codes: 0 ‘***’ 0.001 ‘**’   0.01 ‘*’ 0.05 ‘.’ 0.1}\\[0.1ex]  
\hline\hline
\end{tabular}
\caption{Final output; ``number of comment'' model, \textit{OLS} regressions summary }
\label{tab12}
\end{table}

\begin{figure}[H]\par\medskip
\centering
\includegraphics[scale=0.60]{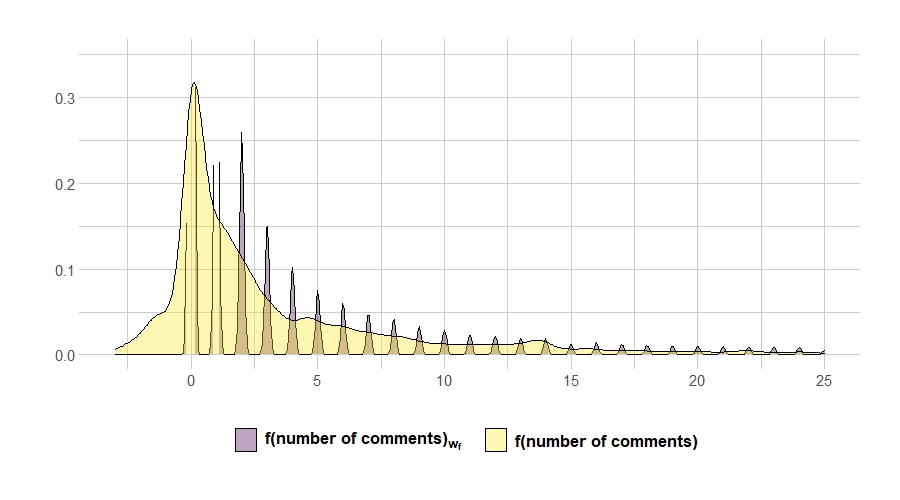}
\caption{Empirical density of ``number of comments''  and  $f(\textit{\text{ number of comments}}|w_f)$}
\label{fig16}
\end{figure}
%%%%%%%%%%%%%%%%%%%%%%%%%%%%%%%%%
\subsection{Variable Selection  Test}
%%%%%%%%%%%%%%%%%%%%%%%%%%%%%%%%%
The performance of the BPA, based on the EC index, has been evaluated by comparing the outcomes of the first three examples with those attainable by applying the minimum
redundancy maximum relevance (mRMRe) algorithm \citep{battiti1994using,kwak2002input,estevez2009normalized}. The latter is an heuristic approach based on information theory metrics, originally proposed by 
%The mRMRe approach is based on the estimation of information theory metrics \cite{cover1999elements}, which was originally proposed by 
\cite{battiti1994using}, The algorithm selects the most relevant variables, either discrete or continuous, within a given set by using a penalising rule based on the amount of redundancy that the variables share with those previously selected by the algorithm \citep{kratzer2018varrank}. At each step, the variables that maximize a given score are considered relevant.   
The rational of this approach can be described as follows. Let $\bm{F}$ be a set of variables and $\bm{C}$ its subset of relevant variables ($\bm{C} \subseteq \bm{F}$). Moreover, let $\bm{S}$ denote the set of already selected variables and $x_i$ a candidate variable. The local score function for a mid scheme (Mutual Information Difference) can be expressed as \citep{kratzer2018varrank} \footnote{There are  two popular ways to combine relevance and redundancy: difference (mid) or quotient (miq)}: 
\begin{equation}\label{eq:Varank}
    g(\alpha,\bm{C},\bm{S},x_i)=
    \underbrace{I(x_i;\bm{C})}_{Relevance}-
    \sum_{x_i \in \bm{S}}\overbrace{\alpha(x_i,x_s,\bm{C},\bm{S})}^{Scaling factor}\underbrace{I(x_i;x_s)}_{Redundancy}
\end{equation}
This function heuristically estimates the variable ranks using mutual information and a  model that adopts multiple search schemes. Indeed, the normalizing function $\alpha$ can assume four possible values \footnote{These values define four implemented models which are: MIFS \citep{battiti1994using}, MIFS-U \citep{kwak2002input},mRMR \citep{peng2005feature} and  normalized MIFS\citep{estevez2009normalized}}. In this application, we have used the normalized MIFS  \citep{estevez2009normalized} to compute the normalizing function $\alpha$: 
\begin{equation*}
    \centering
    \alpha(x_i,x_s,\bm{C},\bm{S})=\frac{1}{|S|min(H(x_i,x_s))}
\end{equation*}
with $H$ denoting entropy, and the quotient (miq) to combine the relevance and redundancy. Then, a varrank analysis sequentially compares relevance with redundancy as defined in \eqref{eq:Varank}. The final output is an ordered list of the selected variables \footnote{The variables are listed either in decreasing or increasing order of importance depending on a backward or forward algorithm is implemented} and a matrix of scores.

%The mRMRe approach is based on the estimation of information theory metrics \cite{cover1999elements}, which was originally proposed by \cite{battiti1994using}.
%, can be described as an ensemble of model \citep{van2010increasing}.
%Assume we have a global set of variables $\bm{F}$ and a subset of relevant variables $\bm{C}$, where $\bm{C} \subseteq \bm{F}$. The variables in set $\bm{C}$ are the variables the user wants in the final model as they are supposed to be important to modeling. Moreover, let $\bm{S}$ denote tje set pf already selected variables and $f_i$ a candidate variable. The local score function is expressed as \citep{kratzer2018varrank}: 
%\begin{equation}
    %g(\alpha,\bm{C},\bm{S},f_i)=
   % \underbrace{I(f_i;\bm{C})}_{Relevance}-
   % \sum_{f_i \in \bm{S}}\overbrace{\alpha(f_i,f_s,\bm{C},\bm{S})}^{Scaling factor}\underbrace{I(f_i;f_s)}_{Redundancy}
%\end{equation}
%There are four possible values of the normalizing function $\alpha$ \footnote{MIFS \citep{battiti1994using}, MIFS-U \citep{kwak2002input},mRMR \citep{peng2005feature} and MIFS\citep{estevez2009normalized}
% and two popular way to combine relevance and redundancy\footnote{difference (mid) or quotient (miq)}.
  The varrank analysis has been applied to check if the variables selected by the BPA are relevant or not for the variables of interest. In the former case, the following should occur: 
  \begin{equation*}
      \sum_{f_i \in \bm{S}}\alpha(f_i,f_s,\bm{C},\bm{S})I(f_i;f_s)\approx0
  \end{equation*}
  Figure~\ref{fig17} shows the result of the varrank function \citep{kratzer2018varrank} that implement the  mRMRe algorithm. Negative scores indicate that redundancy (blue) prevails, while positive scores indicate that relevancy (red) prevails.
  The higher the relevance , the more pronounced the red colour, the higher the redundancy, the deeper the blue.  
  Looking at the graph, we can see that the selection operated by the BPA provides is appropriate  because all the selected variables turn out to be relevant. Indeed, all of them have received positive scores 
 highlighted by a more or less intensive red.
  \begin{figure}[H]\par\medskip
\centering
\begin{minipage}{.5\linewidth}
\centering
\subfloat[]{\label{vsa}\includegraphics[scale=.50]{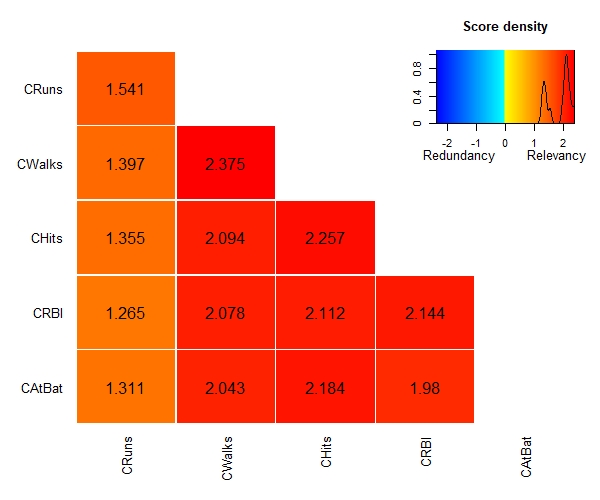}}
\end{minipage}%
\begin{minipage}{.5\linewidth}
\centering
\subfloat[]{\label{vsb}\includegraphics[scale=.50]{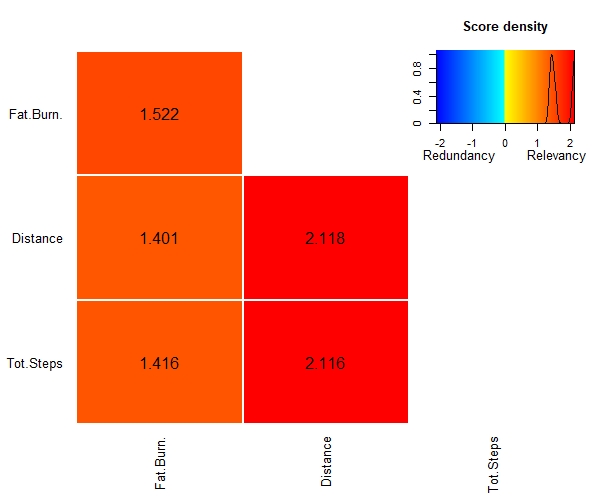}}
\end{minipage}\par\medskip
\centering
\subfloat[]{\label{vsc}\includegraphics[scale=.50]{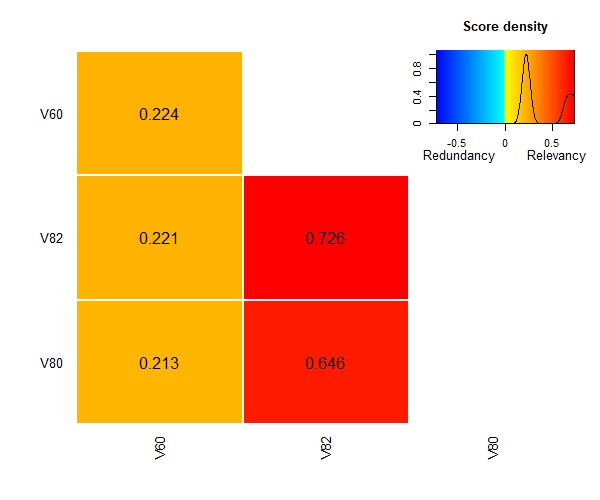}}
\caption{Output of the mRMRe algorithm for the variables of interest (a) ``Salary", (b) ``Calories", (c)  ``Binary Response" considered in the first three examples}
\label{fig17}
\end{figure}

%%%%%%%%%%%%%%%%%%%%%%%%%%%%%%%%%
\subsection{Prediction Test}
%%%%%%%%%%%%%%%%%%%%%%%%%%%%%%%%%
The performance of the BPA based on the $\bar{R}^2$, has been evaluated by comparing the outcomes of the last three examples with those obtainable by applying the elastic net \citep{zou2005regularization}. 
Like LASSO \citep{tibshirani1997lasso}, the elastic net simultaneously performs an automatic variable selection and continuous shrinkage. It can be 
%Simulation studies and real data examples show that the elastic net often outperforms the lasso in terms of prediction accuracy \citep{zou2005regularization}.
%Generally speaking,  the elastic net  can be 
seen as a regularized regression method that linearly combines the $L_1$ and $L_2$ penalties of the LASSO and ridge methods, respectively. 
In details, let us consider a data-set with $n$ observations and $p$ predictors. Let $\bm y= (y_1,\dots,y_n)^T$ be the observations on the target variable, and $\bm X=(\bm x_1,\dots,\bm x_p )$ the matrix of observations on $p$ predictors $\bm x_j=(x_{1j},\dots,x_{nj})^T$, $j=1,\dots,p$. By assuming that the both target variable and the predictors are standardized, then
%\begin{equation}
%    \sum_{i=1}^n y_i=0,\quad \sum_{i=1}^n x_{ij}=0, \quad  \text{and} \quad \sum_{i=1}^n x^2_{ij}=1 \quad \text{for} \quad  j=1,\dots,p
%\end{equation}
%For any fixed non-negative $\lambda_1$ and $\lambda_2$, \citet{zou2005regularization} 
the elastic net criterion, for any fixed non-negative $\lambda_1$ and $\lambda_2$,  is defined as follows (\citet{zou2005regularization})
\begin{equation}
    L(\lambda_1,\lambda_2, \boldsymbol \beta)=|\bm y-\bm X\boldsymbol \beta|^2+\lambda_1|\boldsymbol \beta|_1+\lambda_2|\boldsymbol \beta|_2
    \label{EL}
\end{equation}
where 
\begin{equation*}
    \centering
    \begin{split}
        |\boldsymbol\beta|^2=\sum_j^p \beta^2_j\\
        |\boldsymbol\beta|_1=\sum_j^p \beta_j
    \end{split}
\end{equation*}
The elastic net estimator $\hat{\boldsymbol{\beta}}$ is the solution of Eq.~\ref{EL} \citep{zou2005regularization}:
\begin{equation}
    \hat{\boldsymbol{\beta}}=\text{arg}\,\min_{\boldsymbol{\beta}} \{L(\lambda_1,\lambda_2,\boldsymbol{\beta})\}
\end{equation}
By setting $\alpha=\lambda_2/(\lambda_1+\lambda_2)$, the solution of Eq.~\ref{EL} turns out to be equivalent to that of the optimization problem 
\begin{equation}
    \hat{\boldsymbol{\beta}}=\text{arg}\,\min_{\boldsymbol{\beta}} |\bm{y}-\bm{X} \boldsymbol{\beta}|^2, \quad \text{subject to} \quad (1-\alpha)|\boldsymbol{\beta}|_1+\alpha|\boldsymbol{\beta}|^2 \leq t \quad  \text{for some t}
\end{equation}
The function $(1-\alpha)|\boldsymbol{\beta}|_1+\alpha|\boldsymbol{\beta}|^2$, called the elastic net penalty, is a convex combination of the LASSO and Ridge penalty \citep{zou2005regularization}.\\
%In order to test the performance of the \textit{BPA} in term of prediction (lowMSE), we do a comparison with the \textit{elastic net} for the models based on the $\bar{R}^2$ presented in this work.
The outcomes of the elastic net have been compared to those provided by the  best path-step algorithm, for each variable of interest of the last three examples. 
The data-sets Prostate cancer and Communities and Crime have been split into two parts: one ($70\%$ of the data-set) devoted to the train and the other ($30\%$ of the data-set) finalized to test the accuracy performance of both methods. The procedure has been repeated 100 times with the purpose of avoiding problems like over-fitting or selection bias. 
As for the Blog-Feedback, the 60 included test data-sets have been employed to evaluate the performance of the prediction.  Figure~\ref{main:a} shows that the ``lpsa"" model, built with regressors selected by using the BPA, has a MSE which is $64$ times lower than the one of the elastic net. As shown in Figure~\ref{main:b}, the model ``Violent Crime per population"", built with regressors selected by the BPA has a MSE which is $93$ times lower than the one of the elastic net method. Eventually, Figure~\ref{main:c} shows that the prediction performance of the BPA is very similar to that of the elastic net for the ``number of comment'' model. Nevertheless, the MSE of the BPA is 50\% of times lower than that of the elastic net.

\begin{figure}[H]\par\medskip
\centering
\begin{minipage}{.5\linewidth}
\centering
\subfloat[]{\label{main:a}\includegraphics[scale=.35]{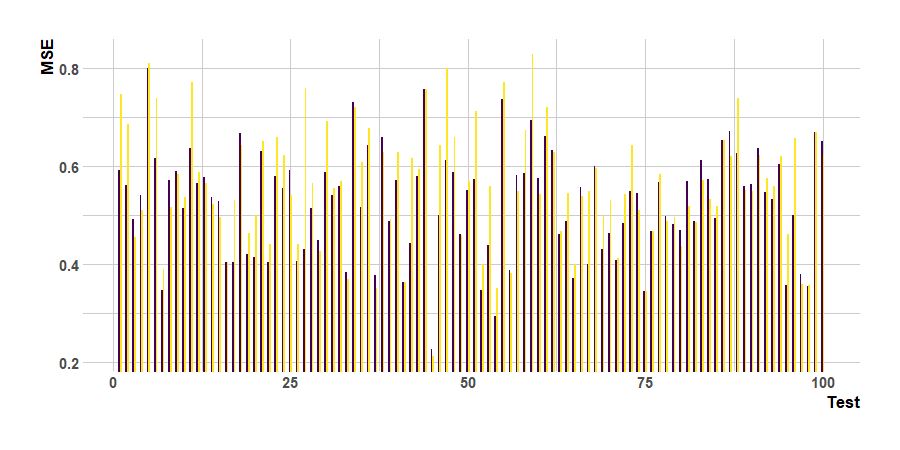}}
\end{minipage}%
\begin{minipage}{.5\linewidth}
\centering
\subfloat[]{\label{main:b}\includegraphics[scale=.35]{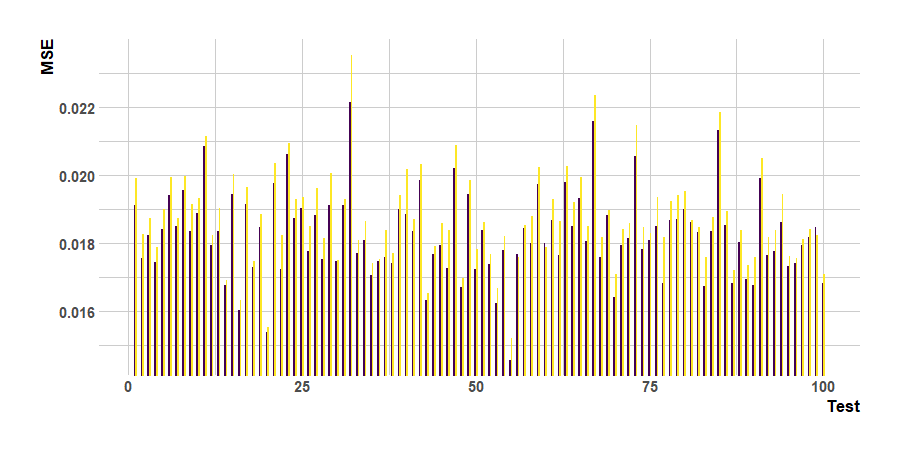}}
\end{minipage}\par\medskip
\centering
\subfloat[]{\label{main:c}\includegraphics[scale=.40]{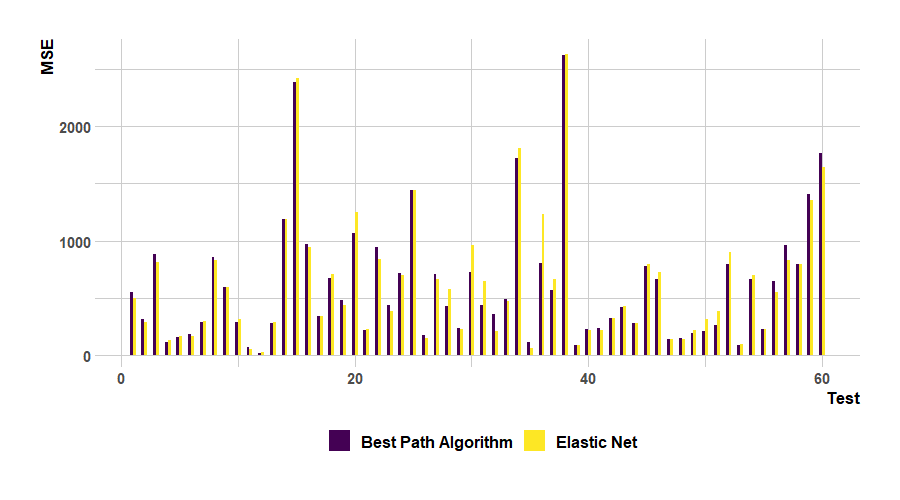}}
\caption{Comparison of the prediction performance between the BPA  and the elastic net: (a)  ``lpsa'' model, (b)  ``Violent crime per population'' model, (c) ``number of comment'' model}
\label{fig18}
\end{figure}

%%%%%%%%%%%%%%%%%%%%%%%%%%%%%%%%%
\section{Conclusion}\label{5}
%%%%%%%%%%%%%%%%%%%%%%%%%%%%%%%%%
%In the last decade, the availability of the large data-set has made difficult the process of variable selection. 
Variable selection has become an arduous task in the last decade, due to the the availability of large data-sets.
Different strategies have been developed to mitigate the 'embarrassment of riches' in the choice of the variables \citep [among others]{eklund2007embarrassment, altman2018curse}. Most of them arise from machine or statistical learning \citep{qian2022feature,hu2022feature}. Many of machine and statistical learning models are black boxes that do not allow the identification and interpretation of the causal relationships among variables of the data-set at hand \citep{rudin2019stop}. 
The need for Machine learning techniques to maintain causal and interpretative aspects is ever increasing, especially when they are applied to some specific methodological areas, such as the economic and financial one.
This paper tries to meet this challenge as it proposes an automatic variable selection that combines the availability of large data-sets with the interpretability of statistical and econometric models.
Indeed, it proposes an automatic variable selection algorithm that hinges on Graphical Models  \citep{jordan2004graphical}, to detect the relevant explanatory variables for a given variable of interest. Once the minimal BIC forest for the data set at hand is built, the mutual information between the variables of each path-step and the node of interest is evaluated, by using the entropy coefficient (EC) of determination. The latter is a predictive power measure for generalized linear models (GLMs) which can be applied to continuous and polytomous variables.
In GLMs, this correlation coefficient is a measure of linearity of a response variable and the canonical parameters, and can be viewed as the proportion of the reduced
uncertainty of a response variable due to the explanatory variables. It can be computed via the symmetric Kullback–Leibler information that measures the similarity between the density of the response variable and the one obtained by conditioning the same on the independent ones (Eshima and Tabata, 2007).
The higher the EC index associated to a given path-step, the better the explanatory power of the variables included in the same. The EC index specializes into the standard coefficient of determination, when the variables under study can be read as potential regressors of a linear econometric model. Once the best path-step is determined, the subset of significant regressors is determined via suitable tests. The wide range of examples here presented, show the effectiveness of the proposed approach also in comparative terms.

Machine learning applied to economic problems therefore needs to retain the causal and interpretative aspects, in addition to the typical predictive orientation of this methodological area.
And despite there has been an increasing trend in healthcare \citep{wiens2018machine},
economic \citep[among others]{athey2019machine,roth2004generalized}
and biological \citep{camacho2018next}
to leverage machine learning or statistical learning  for high-stakes prediction applications that deeply impact human lives, many of the machine learning and statistical learning models are black boxes that do not explain their predictions in a way that humans can understand the results \citep{rudin2019stop}. 
In this way, we can combine the availability of large data-sets with the interpretability of the econometric model,  obtaining models with a god accuracy also in prediction terms. \\
\\

%%%%%%%%%%%%%%%%%%%%%%%%%%%%%%%%%
\bibliographystyle{abbrvnat}
\bibliography{literature}
\newpage

%%%%%%%%%%%%%%%%%%%%%%%%%%%%%%%%%
\begin{appendices}
\section{Variable Name}

\begin{table}[H]\par\medskip
\centering
\scalebox{0.85}{
\begin{tabular}{|c|c|c|c|}
\hline
\textbf{Label Node} & \textbf{Variable}            & \textbf{Label Node} & \textbf{Variable}              \\ \hline
\textit{1}	&	\textit{fold}	&	\textit{52}	&	\textit{PctIlleg}	\\	\hline
\textit{2}	&	\textit{population}	&	\textit{53}	&	\textit{NumImmig}	\\	\hline
\textit{3}	&	\textit{householdsize}	&	\textit{54}	&	\textit{PctImmigRecent}	\\	\hline
\textit{4}	&	\textit{racepctblack}	&	\textit{55}	&	\textit{PctImmigRec5}	\\	\hline
\textit{5}	&	\textit{racePctWhite}	&	\textit{56}	&	\textit{PctImmigRec8}	\\	\hline
\textit{6}	&	\textit{racePctAsian}	&	\textit{57}	&	\textit{PctImmigRec10}	\\	\hline
\textit{7}	&	\textit{racePctHisp}	&	\textit{58}	&	\textit{PctRecentImmig}	\\	\hline
\textit{8}	&	\textit{agePct12t21}	&	\textit{59}	&	\textit{PctRecImmig5}	\\	\hline
\textit{9}	&	\textit{agePct12t29}	&	\textit{60}	&	\textit{PctRecImmig8}	\\	\hline
\textit{10}	&	\textit{agePct16t24}	&	\textit{61}	&	\textit{PctRecImmig10}	\\	\hline
\textit{11}	&	\textit{agePct65up}	&	\textit{62}	&	\textit{PctSpeakEnglOnly}	\\	\hline
\textit{12}	&	\textit{numbUrban}	&	\textit{63}	&	\textit{PctNotSpeakEnglWell}	\\	\hline
\textit{13}	&	\textit{pctUrban}	&	\textit{64}	&	\textit{PctLargHouseFam}	\\	\hline
\textit{14}	&	\textit{medIncome}	&	\textit{65}	&	\textit{PctLargHouseOccup}	\\	\hline
\textit{15}	&	\textit{pctWWage}	&	\textit{66}	&	\textit{PersPerOccupHous}	\\	\hline
\textit{16}	&	\textit{pctWFarmSelf}	&	\textit{67}	&	\textit{PersPerOwnOccHous}	\\	\hline
\textit{17}	&	\textit{pctWInvInc}	&	\textit{68}	&	\textit{PersPerRentOccHous}	\\	\hline
\textit{18}	&	\textit{pctWSocSec}	&	\textit{69}	&	\textit{PctPersOwnOccup}	\\	\hline
\textit{19}	&	\textit{pctWPubAsst}	&	\textit{70}	&	\textit{PctPersDenseHous}	\\	\hline
\textit{20}	&	\textit{pctWRetire}	&	\textit{71}	&	\textit{PctHousLess3BR}	\\	\hline
\textit{21}	&	\textit{medFamInc}	&	\textit{72}	&	\textit{MedNumBR}	\\	\hline
\textit{22}	&	\textit{perCapInc}	&	\textit{73}	&	\textit{HousVacant}	\\	\hline
\textit{23}	&	\textit{whitePerCap}	&	\textit{74}	&	\textit{PctHousOccup}	\\	\hline
\textit{24}	&	\textit{blackPerCap}	&	\textit{75}	&	\textit{PctHousOwnOcc}	\\	\hline
\textit{25}	&	\textit{indianPerCap}	&	\textit{76}	&	\textit{PctVacantBoarded}	\\	\hline
\textit{26}	&	\textit{AsianPerCap}	&	\textit{77}	&	\textit{PctVacMore6Mos}	\\	\hline
\textit{27}	&	\textit{OtherPerCap}	&	\textit{78}	&	\textit{MedYrHousBuilt}	\\	\hline
\textit{28}	&	\textit{HispPerCap}	&	\textit{79}	&	\textit{PctHousNoPhone}	\\	\hline
\textit{29}	&	\textit{NumUnderPov}	&	\textit{80}	&	\textit{PctWOFullPlumb}	\\	\hline
\textit{30}	&	\textit{PctPopUnderPov}	&	\textit{81}	&	\textit{OwnOccLowQuart}	\\	\hline
\textit{31}	&	\textit{PctLess9thGrade}	&	\textit{82}	&	\textit{OwnOccMedVal}	\\	\hline
\textit{32}	&	\textit{PctNotHSGrad}	&	\textit{83}	&	\textit{OwnOccHiQuart}	\\	\hline
\textit{33}	&	\textit{PctBSorMore}	&	\textit{84}	&	\textit{RentLowQ}	\\	\hline
\textit{34}	&	\textit{PctUnemployed}	&	\textit{85}	&	\textit{RentMedian}	\\	\hline
\textit{35}	&	\textit{PctEmploy}	&	\textit{86}	&	\textit{RentHighQ}	\\	\hline
\textit{36}	&	\textit{PctEmplManu}	&	\textit{87}	&	\textit{MedRent}	\\	\hline
\textit{37}	&	\textit{PctEmplProfServ}	&	\textit{88}	&	\textit{MedRentPctHousInc}	\\	\hline
\textit{38}	&	\textit{PctOccupManu}	&	\textit{89}	&	\textit{MedOwnCostPctInc}	\\	\hline
\textit{39}	&	\textit{PctOccupMgmtProf}	&	\textit{90}	&	\textit{MedOwnCostPctIncNoMtg}	\\	\hline
\textit{40}	&	\textit{MalePctDivorce}	&	\textit{91}	&	\textit{NumInShelters}	\\	\hline
\textit{41}	&	\textit{MalePctNevMarr}	&	\textit{92}	&	\textit{NumStreet}	\\	\hline
\textit{42}	&	\textit{FemalePctDiv}	&	\textit{93}	&	\textit{PctForeignBorn}	\\	\hline
\textit{43}	&	\textit{TotalPctDiv}	&	\textit{94}	&	\textit{PctBornSameState}	\\	\hline
\textit{44}	&	\textit{PersPerFam}	&	\textit{95}	&	\textit{PctSameHouse85}	\\	\hline
\textit{45}	&	\textit{PctFam2Par}	&	\textit{96}	&	\textit{PctSameCity85}	\\	\hline
\textit{46}	&	\textit{PctKids2Par}	&	\textit{97}	&	\textit{PctSameState85}	\\	\hline
\textit{47}	&	\textit{PctYoungKids2Par}	&	\textit{98}	&	\textit{LandArea}	\\	\hline
\textit{48}	&	\textit{PctTeen2Par}	&	\textit{99}	&	\textit{PopDens}	\\	\hline
\textit{49}	&	\textit{PctWorkMomYoungKids}	&	\textit{100}	&	\textit{PctUsePubTrans}	\\	\hline
\textit{50}	&	\textit{PctWorkMom}	&	\textit{101}	&	\textit{LemasPctOfficDrugUn}	\\	\hline
\textit{51}	&	\textit{NumIlleg}	&	\textit{102}	&	\textit{ViolentCrimesPerPop}	\\	\hline

\end{tabular}
}
\caption{Name of the variables \textit{Communities and Crime} data-set}
\label{NameVar}
\end{table}

\begin{table}[H]\par\medskip
\begin{tabular}{|c|c|}
\hline
\textbf{Label} & \textbf{Variable specification}                                                                                                                                                                                                                        \\ \hline
1…50           & Average,   standard deviation, min, max and median of the \\                               & 51...60 for the source of the current blog post\\

&With source we mean the blog on which the post appeared                                                                                                                                              \\ \hline
51             & Total   number of comments before basetime                                                                                                                                                                                                             \\ \hline
52             & \begin{tabular}[c]{@{}c@{}}Number   of comments in the last 24 hours before the\\      basetime\end{tabular}                                                                                                                                           \\ \hline
53             & \begin{tabular}[c]{@{}c@{}}Let T1 denote   the datetime 48 hours before basetime,\\      Let T2 denote the datetime 24 hours before basetime.\\      This attribute is the number of comments in the time period\\      between T1 and T2\end{tabular} \\ \hline
54             & \begin{tabular}[c]{@{}c@{}}Number   of comments in the first 24 hours after the\\      publication of the blog post, but before basetime\end{tabular}                                                                                                  \\ \hline
55             & The   difference of Attribute 52 and Attribute 53                                                                                                                                                                                                      \\ \hline
56…60          & \begin{tabular}[c]{@{}c@{}}The   same features as the attributes 51\dots55, but\\      features 56\dots60 refer to the number of links (trackbacks),\\      while features 51\dots55 refer to the number of comments.\end{tabular}                           \\ \hline
61             & \begin{tabular}[c]{@{}c@{}}The length of time between the publication   of the blog post\\      and basetime\end{tabular}                                                                                                                              \\ \hline
62             & The length of the blog post                                                                                                                                                                                                                            \\ \hline
63\dots262       & \begin{tabular}[c]{@{}c@{}}The 200 bag of words features   for 200 frequent words of the\\      text of the blog post\end{tabular}                                                                                                                     \\ \hline
263…269        & \begin{tabular}[c]{@{}c@{}}Binary indicator features (0 or 1) for the   weekday\\      (Monday\dots Sunday) of the basetime\end{tabular}                                                                                                                  \\ \hline
270\dots276        & \begin{tabular}[c]{@{}c@{}}Binary indicator features (0 or 1) for the   weekday\\      (Monday\dots Sunday) of the date of publication of the blog\\      post\end{tabular}                                                                               \\ \hline
277            & \begin{tabular}[c]{@{}c@{}}Number of parent pages: we   consider a blog post P as a\\      parent of blog post B, if B is a reply (trackback) to\\      blog post P\end{tabular}                                                                       \\ \hline
278\dots280       & \begin{tabular}[c]{@{}c@{}}Minimum, maximum, average number   of comments that the\\      parents received\end{tabular}                                                                                                                                \\ \hline
281            & \begin{tabular}[c]{@{}c@{}}The target: the number of   comments in the next 24 hours\\      (relative to basetime)\end{tabular}                                                                                                                        \\ \hline
\end{tabular}
\caption{Name of the variables \textit{Blog-Feedback} data-set}
\label{NameVar2}
\end{table}

\section{Code Availability}
The analyses were performed using the R library
gRapHD which the authors have made available to the R community
via the CRAN repository \citep{JSSv037i01}
\end{appendices}
\end{document}